\documentclass[letterpaper, 10 pt, journal, twoside]{IEEEtran}

\usepackage{siunitx}
\usepackage{relsize}
\usepackage{ifthen}
\usepackage[colorinlistoftodos]{todonotes}

\usepackage[caption=false]{subfig}

\usepackage{graphics} %
\usepackage{graphicx}
\usepackage{rotating}
\usepackage{color}
\usepackage{enumerate}
\usepackage[T1]{fontenc}
\usepackage{psfrag}
\usepackage{epsfig} %
\usepackage{booktabs}
\usepackage{graphicx,url}
\usepackage{multirow}
\usepackage{array}
\usepackage{latexsym}
\usepackage{amsfonts}
\usepackage{amsmath}
\usepackage{amssymb}
\usepackage{mathtools}
\usepackage{amsthm}
\usepackage{xstring}
\usepackage{algorithm} 
\usepackage[noEnd=true, indLines=true, commentColor=NavyBlue, endLComment=~, beginComment=//~]{algpseudocodex} 
\usepackage{xpatch}
\usepackage{multirow}
\usepackage{xcolor}
\usepackage[dvipsnames]{xcolor}
\usepackage{prettyref}
\usepackage{flexisym}
\usepackage{bigdelim}
\usepackage{listings}

\usepackage{enumitem}
\usepackage{xspace}
\usepackage{bm}
\graphicspath{{./figures/}}
\usepackage{tikz}
\usetikzlibrary{matrix,calc}
\usepackage[acronym]{glossaries}
\usepackage[bookmarks=true]{hyperref}
\usepackage[capitalize]{cleveref}
\usepackage{subcaption}
\usepackage{aliascnt}
\usepackage{threeparttable}
\usepackage[normalem]{ulem}

\usepackage{fancyhdr}

\newrefformat{prob}{Problem\,\ref{#1}}
\newrefformat{def}{Definition\,\ref{#1}}
\newrefformat{sec}{Section\,\ref{#1}}
\newrefformat{sub}{Section\,\ref{#1}}
\newrefformat{prop}{Proposition\,\ref{#1}}
\newrefformat{app}{Appendix\,\ref{#1}}
\newrefformat{alg}{Algorithm\,\ref{#1}}
\newrefformat{cor}{Corollary\,\ref{#1}}
\newrefformat{thm}{Theorem\,\ref{#1}}
\newrefformat{lem}{Lemma\,\ref{#1}}
\newrefformat{fig}{Fig.\,\ref{#1}}
\newrefformat{tab}{Table\,\ref{#1}}

\theoremstyle{definition}
\newtheorem{theorem}{Theorem}[section]

\theoremstyle{definition}
 
\crefname{problem}{Problem}{Problems}

\crefname{trule}{Rule}{Rules}

\crefname{corollary}{Corollary}{Corollaries}

\crefname{conjecture}{Conjecture}{Conjectures}

\newtheorem{lemma}{Lemma}[section]
\crefname{lemma}{Lemma}{Lemmas}

\crefname{assumption}{Assumption}{Assumptions}

\newtheorem{definition}{Definition}[section]
\crefname{definition}{Definition}{Definitions}

\newtheorem{proposition}{Proposition}[section]
\crefname{proposition}{Proposition}{Propositions}

\newtheorem{remark}{Remark}[section]
\crefname{remark}{Remark}{Remarks}

\crefname{example}{Example}{Examples}

\newcommand{\bdmath}{\begin{dmath}}
\newcommand{\edmath}{\end{dmath}}
\newcommand{\beq}{\begin{equation}}
\newcommand{\eeq}{\end{equation}}
\newcommand{\bdm}{\begin{displaymath}}
\newcommand{\edm}{\end{displaymath}}
\newcommand{\bea}{\begin{eqnarray}}
\newcommand{\eea}{\end{eqnarray}}
\newcommand{\beal}{\beq \begin{array}{ll}}
\newcommand{\eeal}{\end{array} \eeq}
\newcommand{\beas}{\begin{eqnarray*}}
\newcommand{\eeas}{\end{eqnarray*}}
\newcommand{\ba}{\begin{array}}
\newcommand{\ea}{\end{array}}
\newcommand{\bit}{\begin{itemize}}
\newcommand{\eit}{\end{itemize}}
\newcommand{\ben}{\begin{enumerate}}
\newcommand{\een}{\end{enumerate}}

\renewcommand{\boldsymbol}[1]{{\bm #1}}

\newcommand{\hide}[1]{}

\newcommand{\hiddenText}{{\color{gray} hidden text.}}
\newcommand{\hideWithText}[1]{\hiddenText}

\newcommand{\vtau}{\boldsymbol{\tau}}

\newcommand{\blue}[1]{{\color{blue}#1}}

\newcommand{\linkToPdf}[1]{\href{#1}{\blue{(pdf)}}}
\newcommand{\linkToPpt}[1]{\href{#1}{\blue{(ppt)}}}
\newcommand{\linkToCode}[1]{\href{#1}{\blue{(code)}}}
\newcommand{\linkToWeb}[1]{\href{#1}{\blue{(web)}}}
\newcommand{\linkToVideo}[1]{\href{#1}{\blue{(video)}}}
\newcommand{\linkToMedia}[1]{\href{#1}{\blue{(media)}}}
\newcommand{\award}[1]{\xspace} %

\newcommand{\myparagraph}[1]{\noindent\textbf{#1}}

\newcommand{\traj}{\tau}

\newcommand{\cI}{\mathcal{I}}

\newcommand{\bmat}{\left[ \begin{array}}
\newcommand{\emat}{\end{array}\right]}

\newcommand{\tup}[1]{\left( #1\right)}

\newacronym{acr:mapf}{MAPF}{Multi-Agent Path Finding}
\newacronym{acr:fico}{FICO}{Finite-Horizon Closed-Loop Factorization}
\newacronym{acr:soc}{SOC}{Sum of Cost}
\newacronym{acr:mpc}{MPC}{Model Predictive Control}
\newacronym{acr:rl}{RL}{Reinforcement Learning}
\newacronym{acr:ert}{ERT}{Execution Response Time}
\newacronym{acr:accbs}{ACCBS}{Anytime Closed-Loop Conflict-Based Search}
\newacronym{acr:cbs}{CBS}{Conflict-Based Search}

\usepackage{ifthen}
\newboolean{anonymous}
\setboolean{anonymous}{false}
\usepackage{changes}
\usepackage{cite}

\begin{document}
\title{
Adaptive-Horizon Conflict-Based Search for \\ Closed-Loop Multi-Agent Path Finding
}

\author{Jiarui Li$^{1}$,
        Federico Pecora$^{2}$,
        Runyu Zhang$^{1}$,
        and Gioele Zardini$^{1}$
\thanks{Manuscript received: June 11, 2026; Accepted June 23, 2026.}%
\thanks{This paper was recommended for publication by Editor Chao-Bo Yan upon evaluation of the Associate Editor and Reviewers' comments.
This work was supported by Prof. Zardini's grant from the MIT Amazon Science Hub, hosted in the Schwarzman College of Computing. Zhang was supported by the MIT Postdoctoral Fellowship
Program for Engineering Excellence.} %
\thanks{$^{1}$Jiarui Li, Runyu Zhang, and Gioele Zardini are with the Laboratory for Information and Decision Systems, Massachusetts Institute of Technology, Cambridge, MA, USA (e-mails: \{jiarui01, runyuzha, gzardini\}@mit.edu).}
\thanks{$^{2}$Federico Pecora is an independent contributor (e-mail: federico.pecora@gmail.com).}
}

\markboth{IEEE Robotics and Automation Letters. Preprint Version. Accepted June, 2026}
{LI \MakeLowercase{\textit{et al.}}: Adaptive-Horizon Conflict-Based Search for Closed-Loop Multi-Agent Path Finding} 

\maketitle

\begin{abstract}
\gls{acr:mapf} is a core coordination problem for large robot fleets in automated warehouses and logistics.
Existing approaches are typically either open-loop planners, {which must compute complete trajectories before execution and therefore may incur substantial planning latency before actions can be taken}, or closed-loop heuristics without reliable performance guarantees, limiting their use in safety-critical deployments.
This paper presents \gls{acr:accbs}, a closed-loop algorithm built on a finite-horizon variant of \gls{acr:cbs} with a horizon-changing mechanism inspired by iterative horizon-deepening in \gls{acr:mpc}. \gls{acr:accbs} dynamically adjusts the planning horizon based on the available computational budget, and reuses a single constraint tree to enable seamless transitions between horizons. 
As a result, it produces high-quality feasible solutions quickly while being asymptotically optimal as the budget increases, exhibiting anytime behavior. 
Extensive case studies demonstrate that \gls{acr:accbs} achieves a favorable balance between computational efficiency, solution quality, and execution flexibility, while naturally accommodating online disturbances through its closed-loop formulation.
\end{abstract}

\begin{IEEEkeywords}
Planning, Scheduling and Coordination; Multi-Robot Systems; Logistics
\end{IEEEkeywords}

\IEEEpeerreviewmaketitle

\section{Introduction}

\IEEEPARstart{A}{utomated} warehouses and fulfillment centers, where thousands of robots sort and deliver items, are now central to global supply chains, e-commerce, and manufacturing. 
{This setting naturally gives rise to the \acrfull{acr:mapf} problem: moving agents from starts to goals on a graph while avoiding collisions~\cite{stern2019mapf}.}
\gls{acr:mapf} captures a core combinatorial difficulty of large-scale coordination and has attracted sustained interest across robotics and AI~\cite{d2012guest,shaoul2024accelerating,wang2025lns2+,paul2022multi,wurman2008coordinating,standley2010finding}.
{Despite its importance, optimal \gls{acr:mapf} is NP-hard~\cite{yu2013planning}.}
{Existing solvers broadly trade scalability for guarantees: heuristic and approximate methods scale well but offer limited quality guarantees, whereas guaranteed solvers retain optimality or bounded-suboptimality guarantees but suffer worst-case exponential scaling~\cite{felner2017search}; see~\cref{sec:related-work}.}

\begin{figure}[tb]
    \centering
    \includegraphics[width=.8\linewidth, trim=1cm 1.5cm 1cm 1.5cm, clip]{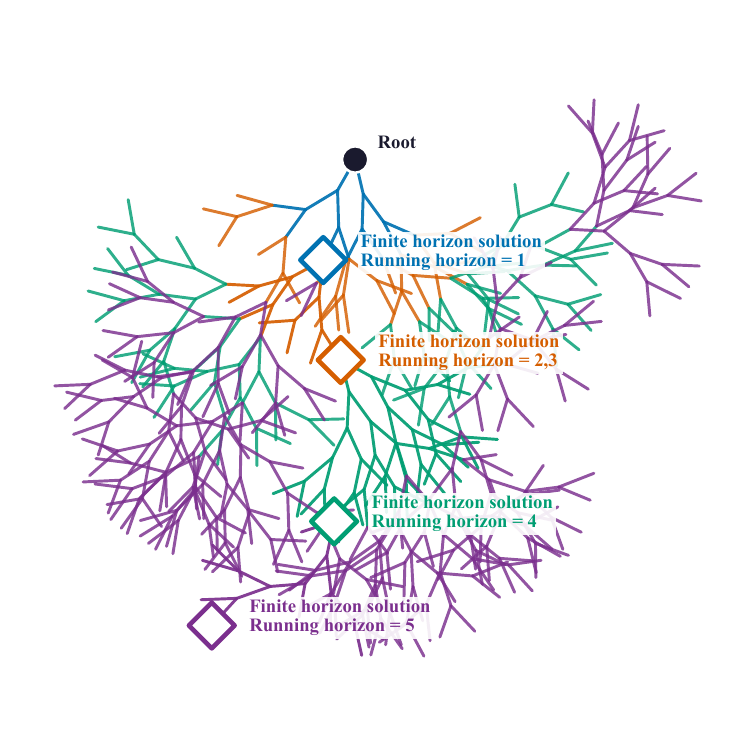}
    \caption{\textbf{Constraint tree in \gls{acr:accbs}.} Colors indicate successive stages of the search. 
    The same constraint tree is reused as the planning horizon grows, enabling seamless transitions between different horizon lengths (\cref{prop:tree-reuse}).}
    \label{fig:f1}
    \vspace*{-3mm}
\end{figure}

{
We build on \acrfull{acr:cbs}~\cite{sharon2015conflict}, a core guaranteed optimizer that best-first searches a binary \emph{constraint tree} of constraints, feasible per-agent trajectories, and joint costs, branching on conflicts until the first conflict-free, hence optimal, node is found.
However, \gls{acr:cbs} and variants such as ICBS~\cite{boyarski2015icbs}, ECBS~\cite{barer2014suboptimal}, and EECBS~\cite{li2021eecbs} are typically \emph{open-loop, full-trajectory} planners: execution starts only after complete trajectories are returned~\cite{atzmon2018robust}.
On dense, long-horizon instances, early conflict resolutions can induce later ones, producing large constraint trees, poor scalability, and long execution response times.
}
{
Closed-loop \gls{acr:mapf} reduces this latency by replanning from the current fleet state and returning only the next movement. 
This makes finite-horizon \gls{acr:cbs} natural: resolve conflicts over a finite lookahead, execute the first step, and replan online.
Yet this raises two challenges: the horizon must balance lookahead quality against exponential search growth, and finite-horizon \gls{acr:cbs} no longer inherits the standard completeness and optimality arguments; in particular, its cost is not directly a Lyapunov function for closed-loop operation~\cite{mayne2000constrained,bachtiar2016continuity}. 
Thus, naive receding-horizon \gls{acr:cbs} may lose guarantees.
}

{
We therefore seek a \emph{closed-loop} \gls{acr:mapf} algorithm that avoids costly infinite-horizon \gls{acr:cbs}-style replanning while retaining guarantees. 
Inspired by \gls{acr:mpc}, we use finite-horizon \gls{acr:cbs} as the core optimizer and add a dynamic horizon mechanism that expands lookahead over time, yielding finite-horizon efficiency with asymptotic optimality.
 {The dynamic horizon resolves both challenges at once: it replaces the brittle a priori horizon choice with cheap online horizon growth (\cref{sec:active-prefix-tree-reuse}), and it recovers the asymptotic optimality that a fixed window loses~\cite{li2021lifelong,veerapaneni2025windowed}.}
}

\paragraph*{Statement of Contribution}
This paper introduces \acrfull{acr:accbs}, an adaptive horizon \gls{acr:cbs} algorithm for solving closed-loop MAPF problems.
We first formalize a finite-horizon variant of \gls{acr:cbs} that enforces conflict-freedom only over a finite lookahead and is used in a closed-loop fashion. 
Building on this, we design \gls{acr:accbs}, which adapts the planning horizon online to the available computational budget via an \emph{active prefix} and constraint-tree reuse (\cref{fig:f1}), yielding an anytime algorithm that transitions seamlessly between horizons. 
We prove a cost invariance property (\cref{lem:cost-invariance}) that justifies this horizon adaptation, and establish completeness and asymptotic optimality  {(\cref{thm:accbs-theory})}, preserving the theoretical guarantees of \gls{acr:cbs}. 
Extensive experiments (\cref{sec:experiments}) show that, for a given time budget, \gls{acr:accbs} scales to larger instances than infinite-horizon \gls{acr:cbs} and its variants while maintaining high solution quality. 
Concretely, the dynamic horizon brings three merits.
\emph{(i)~It removes a brittle design choice}: the right fixed horizon depends jointly on the instance and the per-step budget (\cref{tab:horizon}), whereas \gls{acr:accbs} adapts it online and requires no such tuning.
\emph{(ii)~Its growth is provably cheap}: by cost invariance (\cref{lem:cost-invariance}) and constraint-tree reuse (\cref{prop:tree-reuse}), enlarging the horizon reuses the entire search tree, so the cumulative work is comparable to a \emph{single} finite-horizon \gls{acr:cbs} run at the largest horizon---this mechanism is the technical core of the paper.
\emph{(iii)~It restores guarantees that a fixed window loses}~\cite{li2021lifelong,veerapaneni2025windowed}, recovering \gls{acr:cbs}-style asymptotic optimality (\cref{thm:accbs-theory}).

\section{Related Work, Definitions, and Terminology} 
{
\subsection{Related work} \label{sec:related-work}
MAPF solvers broadly split into scalable heuristic methods and guaranteed methods. 
The former include CA*~\cite{silver2005cooperative}, OD~\cite{standley2010finding}, MAPF-LNS2~\cite{li2022mapf}, LaCAM~\cite{okumura2023lacam}, and anytime variants~\cite{li2021anytime,okumura2023improving,okumura2023engineering,gandotra2025anytime}; they scale well but usually lack optimality or bounded-suboptimality guarantees, which can be undesirable in safety-critical settings~\cite{shen2023tracking}. 
The latter include \gls{acr:cbs}~\cite{sharon2015conflict} and its variants ICBS~\cite{boyarski2015icbs}, ECBS~\cite{barer2014suboptimal}, and EECBS~\cite{li2021eecbs}, as well as BCP~\cite{lam2022branch}, ICTS~\cite{sharon2013increasing}, and M*~\cite{wagner2015subdimensional}; they return complete collision-free trajectories with certified quality, but are typically \emph{open-loop, full-trajectory} planners with worst-case exponential scaling~\cite{felner2017search}.

A complementary line couples planning and execution. 
RHCR~\cite{li2021lifelong} periodically replans over a fixed window for lifelong \gls{acr:mapf}~\cite{ma2017lifelong}, while the \gls{acr:mapf} System framework~\cite{li2025fico} formalizes feedback-based \emph{closed-loop} \gls{acr:mapf} algorithms that output one fleet movement from the current state at each step. 
This enables receding-horizon planning, but fixed-window methods require an a priori horizon, and existing closed-loop heuristics do not recover the CBS-style asymptotic guarantees pursued here.

\myparagraph{Positioning of \gls{acr:accbs}} --
\gls{acr:accbs} combines finite-horizon closed-loop execution with guaranteed \gls{acr:cbs}-style search. 
Unlike fixed-window or prior closed-loop methods \cite{li2021lifelong,li2025fico}, it adapts the horizon online via iterative deepening and reuses one constraint tree across horizons, avoiding brittle horizon selection while recovering asymptotic optimality.
}

\subsection{Definitions and terminology} \label{sec:terminologies}

\begin{definition}[\gls{acr:mapf} instance]
\label{def:mapf-inst}
A \gls{acr:mapf} \emph{instance} is a tuple~$\cI=\tup{G,A,\rho_s, \rho_g}$ where~$G=\tup{V,E}$ is a directed reflexive graph,~$A=\{a_1, \ldots, a_N\}$ is a set of~$N$ agents, and~$\rho_s,\rho_g\colon A\to V$ map each agent~$a\in A$ to its start vertex~$s_a=\rho_s(a)$ and goal vertex~$g_a=\rho_g(a)$.
\end{definition}

\begin{definition}[Trajectories and conflicts]
For an agent $a_i\in A$, a \emph{trajectory} is a finite sequence of vertices
$\traj^{a_i}=[v^i_0,v^i_1,\ldots,v^i_{T_i}]$.
Given trajectories $\traj^{a_i}$ and $\traj^{a_j}$ of equal length $T$, we define:
(a) a \emph{vertex conflict} if $\exists t\in\{0,\ldots,T\}$ such that $v^i_t = v^j_t$; and
(b) an \emph{edge conflict} if $\exists t\in\{0,\ldots,T\}$ such that $v^i_t = v^j_{t-1}$ and $v^i_{t-1} = v^j_t$.
Two trajectories are \emph{conflict-free} if they exhibit neither vertex nor edge conflicts.
\end{definition}

We denote by~$\vtau^A$ the collection of trajectories for all agents in~$A$\footnote{
Since~$G$ is reflexive, an agent that reaches its goal can wait there or temporarily vacate it. 
We therefore assume, without loss of generality, that all trajectories share a common length (the \emph{makespan}).
}. 
Recent work~\cite{li2025fico} introduced the notion of a \emph{\gls{acr:mapf} system}, a system-level framework that integrates planning and execution within a single feedback loop.
The resulting \emph{unified \gls{acr:mapf} problem} casts a range of \gls{acr:mapf} variants (one-shot~\cite{stern2019mapf}, lifelong~\cite{ma2017lifelong}, execution delay~\cite{ma2017multi}, etc.) as instances of a single controller-synthesis problem.

\begin{definition}[\gls{acr:mapf} system, unified \gls{acr:mapf} problem, and algorithms] \label{def:mapf-system}
At each discrete time $t\in\mathbb{N}_0$, let $\cI_t = \tup{G^t,A^t,\rho_s^t,\rho_g^t}$ be the current \gls{acr:mapf} instance, where $G^t=\tup{V^t,E^t}$. A \emph{state} is a map $x_t\in (V^t)^{A^t}$, where $x_t(a)$ is the position of agent $a\in A^t$ at time $t$. A joint \emph{movement command} is a map $u_t\in (E^t)^{A^t}$ satisfying~${\textstyle \forall a\in A^t:\; u_t(a) = \tup{x_t(a),\,x_{t+1}(a)} \in E^t}$,
for the executed next state $x_{t+1}$; waiting is modeled by self-loop edges $(v,v)\in E^t$.
The \emph{\gls{acr:mapf} system} models planning and execution in a feedback loop in which an \emph{Actuator} and \emph{Environment} capture uncertainties, and a \emph{Controller} selects joint movements. The \emph{unified \gls{acr:mapf} problem} is to design a controller
$g \colon (x_t,\cI_t,t) \mapsto \hat{u}_t$
that outputs a planned movement $\hat{u}_t$ at each time $t$ until a terminal condition is met. 
We call $g$ \emph{open-loop} if $\hat{u}_t = g(\cI_t,t)$, i.e., the entire trajectories are computed a priori, and \emph{closed-loop} if $\hat{u}_t = g(x_t,\cI_t,t)$, i.e., movements are recomputed online from the current state.
\end{definition}

\section{\gls{acr:accbs}}

In this section, we present \gls{acr:accbs}. 
We first formulate a finite-horizon {closed-loop} variant of \gls{acr:cbs} (\cref{sec:finite-horizon-cbs}) that optimizes over an~$H$-step prefix and returns only the first control input. 
We then discuss trade-offs induced by choices of~$H$ (\cref{sec:horizon-changing}), introduce \emph{active prefix} and constraint-tree reuse for changing horizons (\cref{sec:active-prefix-tree-reuse}), and present the full algorithm and its guarantees (\cref{sec:accbs-details}).

\subsection{Finite-horizon closed-loop CBS} \label{sec:finite-horizon-cbs}
The classical \gls{acr:cbs} algorithm~\cite{sharon2015conflict} and its variants~\cite{boyarski2015icbs,barer2014suboptimal,li2021eecbs} are typically used as \emph{open-loop} \gls{acr:mapf} solvers: all conflicts over the entire horizon are resolved \emph{before} execution, which can be computationally expensive.

We instead consider a finite-horizon closed-loop variant of \gls{acr:cbs} that optimizes only over the first~$H$ time steps, where $H$ is a fixed hyperparameter.
At each time~$t$, it plans~$H$-step trajectories that are conflict-free on~$\{t,\dots,t+H\}$ and returns only the \emph{first} movement as the control input.
Embedded in the \gls{acr:mapf} system (\cref{def:mapf-system}) and
invoked at every step, this yields a feedback (\emph{closed-loop})
controller.
\begin{definition}[$H$-step trajectory]
\label{def:finite-horizon-traj}
Fix a time~$t$ and the \gls{acr:mapf} instance~$\cI_t = \tup{G^t, A^t, \rho_s^t, \rho_g^t}$ with~$G^t = \tup{V^t, E^t}$, and let~$x_t$ be the current state.
For an agent~$a_i \in A^t$, an \emph{$H$-step trajectory} at time~$t$ is a sequence~$\tau^{a_i,H}_{\mathrm{finite}}
        = [v^{a_i}_0, v^{a_i}_1, \ldots, v^{a_i}_H]$ of vertices in~$V^t$ such that~$v^{a_i}_0 = x_t(a_i)$ and~$\tup{v^{a_i}_\ell, v^{a_i}_{\ell+1}} \in E^t$ for all~$\ell \in \{0,\dots,H-1\}$.
\end{definition}

\begin{definition}[Vertex and edge constraints]
Let~$H\in \mathbb{N}$ be the planning horizon.    
A \emph{vertex constraint} is a tuple $c^{\mathrm{vtx}}=\tup{a_i,t_i,v_i}$, where~$a_i\in A^t$, $v_i\in V^t$, and~$t_i\in\{0,\dots,H\}$, prohibiting agent $a_i$ from being at vertex $v_i$ at time $t_i$.
An \emph{edge constraint} is a tuple $c^{\mathrm{edg}}=\tup{a_i,t_i,e_i}$, where $a_i\in A^t$, $e_i=\tup{u_i,w_i}\in E^t$, and $t_i\in\{0,\dots,H-1\}$, prohibiting agent $a_i$ from traversing edge $e_i$ from time $t_i$ to $t_i+1$.
A \emph{constraint set} is a finite set $C=\{c^{\mathrm{vtx}}_1,\ldots,c^{\mathrm{vtx}}_p,c^{\mathrm{edg}}_1,\ldots,c^{\mathrm{edg}}_q\}$.
\end{definition}

\begin{definition}[Constraint satisfaction]
Let~$\vtau_{\mathrm{finite}}^{A^t,H}
    = \{\tau^{a_1,H}_{\mathrm{finite}},\ldots,
        \tau^{a_N,H}_{\mathrm{finite}}\}$ be a set of $H$-step trajectories
    at time~$t$.
We say that~$\vtau_{\mathrm{finite}}^{A^t,H}$ \emph{satisfies} a
constraint set~$C$ if every constraint~$c\in C$ is obeyed by the corresponding agent trajectory: for~$c=\tup{a_i,t_i,v_i}$ (vertex),~$v^{a_i}_{t_i}\neq v_i$; for~$c=\tup{a_i,t_i,\tup{u_i,w_i}}$ (edge),~$\tup{v^{a_i}_{t_i}, v^{a_i}_{t_i+1}} \neq \tup{u_i,w_i}$.
\end{definition}

To make finite-horizon planning into a proxy for the original long-horizon objective, we equip it with an \gls{acr:mpc}-inspired cost that combines a running cost over the horizon with a terminal cost approximating the remaining time-to-go.

\begin{definition}[Cost function] 
    \label{def:cost-function}
Let~$\tau_{\mathrm{finite}}^{a_i,H}
    = [v^{a_i}_0, \ldots, v^{a_i}_H]$ be an~$H$-step trajectory of agent~$a_i$ at time~$t$.

    \noindent \textbf{Running cost.}
    For each time step~$\ell\in \{0,\ldots, H-1\}$, define
    \begin{equation*}
        p^{a_i}(v^{a_i}_\ell)
        =
        \begin{cases}
            1, & v^{a_i}_\ell \neq \rho_g^t(a_i), \\
            0, & \text{otherwise}.
        \end{cases}
    \end{equation*}
    Thus, the running cost penalizes each time step in which~$a_i$ has not yet reached its goal.

    \noindent \textbf{Terminal cost.}
    Let~$\gamma(v,a_i)$ be the length of a shortest path in~$G^t$ from~$v \in V^t$ to the goal~$\rho_g^t(a_i)$.
    The terminal cost is~$q^{a_i}(v^{a_i}_H) = \gamma(v^{a_i}_H, a_i)$, which approximates the remaining cost-to-go beyond the~$H$-step horizon.

    The total cost of a single trajectory is the sum of the running and terminal costs, and the cost of a set of trajectories is the sum over agents:~{$\mathcal{C}({\traj^{a_i,H}_{\mathrm{finite}}}) = \textstyle \sum_{t=0}^{H-1} p^{a_i}(v^{a_i}_t) + q^{a_i}(v^{a_i}_H),$~$\mathcal{C}({\vtau^{A^t,H}_{\mathrm{finite}}}) = \textstyle \sum_{a_i} \mathcal{C}(\traj^{a_i,H}_{\mathrm{finite}})$}.
\end{definition}

\begin{remark}[\gls{acr:mpc}-style formulation of the cost]
The cost in \cref{def:cost-function} has the standard \gls{acr:mpc} structure: a sum of \emph{stage costs}~$p^{a_i}$ over the horizon, plus a \emph{terminal cost}~$q^{a_i}$ that approximates the infinite-horizon tail.
Here,~$p^{a_i}$ coincides with the usual \gls{acr:mapf} sum-of-costs objective, while~$q^{a_i}$ is an admissible heuristic given by the graph distance to the goal.
This structure is crucial in \cref{sec:horizon-changing,sec:active-prefix-tree-reuse}, where we analyze cost behavior as the horizon~$H$ changes.
\end{remark}

\begin{definition}[Node in finite-horizon constraint tree]
\label{def:node}
Fix a horizon $H$ and a time $t$.
A \emph{node} in the finite-horizon constraint tree is a triple
$n = \bigl(C(n),\, \vtau^{A^t,H}_{\mathrm{finite}}(n),\, J_H(n)\bigr)$, where
$C(n)$ is a finite constraint set (vertex and edge constraints),
$\vtau^{A^t,H}_{\mathrm{finite}}(n)
        = \{\tau^{a_1,H}_{\mathrm{finite}},\ldots,
            \tau^{a_N,H}_{\mathrm{finite}}\}$ is a set of $H$-step trajectories at time $t$ for all agents $a_i\in A^t$ that satisfies $C(n)$, and~$J_H(n) = \mathcal{C}\bigl(\vtau^{A^t,H}_{\mathrm{finite}}(n)\bigr)$ is the \emph{node cost}.
\end{definition}

\begin{definition}[Finite-horizon constraint tree]
For a fixed horizon~$H$, the \emph{finite-horizon constraint tree}~$\mathcal{T}_H$ is a rooted tree whose nodes are as in \cref{def:node}. 
The root node has an empty constraint set and its trajectories are given by individual planning without constraints.
Each non-root node is obtained from its parent by adding a single vertex or edge constraint that resolves one chosen conflict.
During the \gls{acr:cbs} search, the set of unexpanded nodes of~$\mathcal{T}_H$ is maintained in a priority queue~$\mathrm{OPEN}$ sorted by~$J_H(n)$.
\end{definition}

{
\begin{remark}[Efficient low-level individual planning]
    \label{rem:lowlevel}
    In the finite-horizon closed-loop \gls{acr:cbs} in~\cref{sec:finite-horizon-cbs} and in \gls{acr:accbs} in~\cref{sec:accbs-details}, \textsc{IndPlan} solves each constrained single-agent planning problem. 
    It first uses constrained A* over the constrained trajectory segment, then uses a perfect heuristic, i.e., the exact shortest-path distance to the goal, computed lazily by backward BFS from the goal. 
    This avoids unnecessary preprocessing~\cite{okumura2023improving,okumura2023lacam,li2025fico}. 
\end{remark}
}

Note that the~$H$-step trajectories stored in a node may contain conflicts.
{
The prefix search in \cref{alg:prefix-cbs} is used in both finite-horizon closed-loop \gls{acr:cbs} and \gls{acr:accbs} later in this letter. It expands the constraint tree until it finds a node whose trajectories are conflict-free over the first $H$ steps.
Finite-horizon closed-loop \gls{acr:cbs} applies it directly with a fixed horizon $H$ and returns the first movement of the returned node.
}

\makeatletter
\xpatchcmd{\algorithmic}{\itemsep\z@}{\itemsep=0.5pt}{}{}
\makeatother

\begin{algorithm}[t]
\begin{minipage}{\columnwidth}
    \small
    \caption{Prefix-\gls{acr:cbs} search primitive}
    \label{alg:prefix-cbs}
    \begin{algorithmic}[1]
    \Require Current instance $\cI_t$, state $x_t$, trajectory horizon $H$, active horizon $h$, queue $\mathrm{OPEN}$, time limit $t_{\max}$
    \While{$\mathrm{OPEN} \neq \mathsf{EmptyQueue}$ \textbf{and not} $\Call{TimeOut}{t_{\max}}$}
        \State $n \gets \mathrm{OPEN}.\mathbf{dequeue}()$
        \State $\mathrm{Conflict},a_i,a_j \gets \Call{FindConflict}{n.\vtau_{\mathrm{finite}}^{A^t,H},h}$
        \If{$\mathrm{Conflict}=\emptyset$}
            \State \Return $n$ \Comment{The active prefix of $n$ is conflict-free.}
        \EndIf
        \For{$a \in \{a_i,a_j\}$}
            \State $n_{\mathrm{new}}^{a}.\mathrm{Constr} \gets n.\mathrm{Constr} \cup \Call{NewConstr}{\mathrm{Conflict},a}$
            \State $n_{\mathrm{new}}^{a}.\vtau_{\mathrm{finite}}^{A^t,H} \gets \Call{IndPlan}{\cI_t,x_t,H,n_{\mathrm{new}}^{a}.\mathrm{Constr}}$
            \State $\mathrm{OPEN}.\mathbf{insert}(n_{\mathrm{new}}^{a})$
        \EndFor
    \EndWhile
    \State \Return $\emptyset$ \Comment{Timeout or no feasible prefix.}
    \end{algorithmic}
\end{minipage}
\end{algorithm}

{
\begin{remark}[Relation to Windowed \gls{acr:cbs}~\cite{li2021lifelong}]
\label{rem:comparison_windowed}
The finite-horizon \gls{acr:cbs} used here differs semantically from Windowed \gls{acr:cbs} (RHCR). 
RHCR is an \emph{open-loop} sequential generator for lifelong settings with assigned goal sequences: it commits chunks of $h \geq 1$ steps and stitches them into a global path. 
Our method is instead a \emph{memoryless closed-loop controller}: at each time~$t$, it replans from the current state~$x_t$, commits only the first step, and discards the rest. 
This absorbs immediate disturbances, such as execution delays or new agent arrivals, at every timestep and requires no known future goal sequence.
\end{remark}
}

\subsection{Horizon selection dilemma} \label{sec:horizon-changing}

In finite-horizon \gls{acr:cbs}, the horizon $H$ plays a role analogous to finite-horizon optimal control and \gls{acr:mpc}, determining how far ahead conflicts and costs are considered, with future effects captured via a terminal cost.
Increasing $H$ better approximates the long-horizon \gls{acr:mapf} problem, but also expands the search space and deepens the constraint tree $\mathcal{T}_H$.
As shown in \cref{tab:horizon}, small horizons are efficient but myopic, whereas large horizons improve solution quality at significantly higher computational cost.

\begin{table}[tb]
    \centering
    \footnotesize
    \caption{In finite-horizon \gls{acr:cbs}, the horizon choice strongly affects the computational cost, and finite horizons scale much better than infinite-horizon \gls{acr:cbs}.}
    \label{tab:horizon}
    \renewcommand{\arraystretch}{0.9}
    \setlength{\tabcolsep}{3.5pt}
    \begin{tabular}{@{}llccccccc@{}}
        \toprule
        & & & & \multicolumn{3}{c}{ {\textbf{Per-step time (ms)}}} & \\
        \cmidrule(lr){5-7}
        \textbf{Map} & \textbf{$N$} & \textbf{$H$} & \textbf{SOC} &  {\textbf{mean}} &  {\textbf{std}} &  {\textbf{$p_{95}$}} & \textbf{ERT (ms)$^\ddagger$} \\[0.3em]
        \midrule
        \multirow{12}{*}{\rotatebox[origin=c]{90}{\texttt{Small Empty}}}
            & \multirow{4}{*}{10}  & 1         & 15          &  {0.19}   &  {0.13}   &  {0.43}    & 0.441    \\
            &                      & 3         & 1           &  {0.23}   &  {0.17}   &  {0.50}    & 0.523    \\
            &                      & 5         & 1           &  {0.27}   &  {0.18}   &  {0.57}    & 0.584    \\
            &                      & $\infty^*$& 1           &  {--}     &  {--}     &  {--}      & 8.241    \\
        \cmidrule(l){2-8}
            & \multirow{4}{*}{20}  & 1         & 8           &  {0.20}   &  {0.15}   &  {0.40}    & 0.440    \\
            &                      & 3         & 5           &  {1.38}   &  {1.24}   &  {3.61}    & 3.806    \\
            &                      & 5         & 4           &  {5.72}   &  {5.49}   &  {15.33}   & 16.679   \\
            &                      & $\infty$  & 3           &  {--}     &  {--}     &  {--}      & 2291.28  \\
        \cmidrule(l){2-8}
            & \multirow{4}{*}{30}  & 1         & 55          &  {0.13}   &  {0.08}   &  {0.24}    & 0.287    \\
            &                      & 3         & 19          &  {73.4}   &  {71.8}   &  {231.5}   & 232.167  \\
            &                      & 5         & 12          &  {1058.1} &  {1054.6} &  {3112.5}  & 3322.275 \\
            &                      & $\infty$  & N/A$^\dagger$ &  {N/A}  &  {N/A}    &  {N/A}     & N/A      \\
        \midrule
        \multirow{12}{*}{\rotatebox[origin=c]{90}{\texttt{Small Random}}}
            & \multirow{4}{*}{25}  & 1         & 10          &  {0.06}   &  {0.04}   &  {0.12}    & 0.130    \\
            &                      & 3         & 3           &  {0.09}   &  {0.06}   &  {0.19}    & 0.207    \\
            &                      & 5         & 2           &  {0.14}   &  {0.09}   &  {0.30}    & 0.312    \\
            &                      & $\infty$  & 2           &  {--}     &  {--}     &  {--}      & 44.919   \\
        \cmidrule(l){2-8}
            & \multirow{4}{*}{50}  & 1         & 38          &  {0.23}   &  {0.14}   &  {0.49}    & 0.508    \\
            &                      & 3         & 24          &  {0.55}   &  {0.76}   &  {1.35}    & 1.434    \\
            &                      & 5         & 23          &  {4.31}   &  {3.95}   &  {11.33}   & 12.419   \\
            &                      & $\infty$  & N/A         &  {N/A}    &  {N/A}    &  {N/A}     & N/A      \\
        \cmidrule(l){2-8}
            & \multirow{4}{*}{100} & 1         & 77          &  {21.5}   &  {19.4}   &  {58.7}    & 62.136   \\
            &                      & 3         & 66          &  {437.2}  &  {436.9}  &  {1286.0}  & 1352.405 \\
            &                      & 5         & 45          &  {692.8}  &  {705.3}  &  {2051.2}  & 2134.605 \\
            &                      & $\infty$  & N/A         &  {N/A}    &  {N/A}    &  {N/A}     & N/A      \\
        \bottomrule
    \end{tabular}
    \parbox{\linewidth}{\footnotesize{$^*$ $\infty$ represents classical \gls{acr:cbs} (infinite horizon). $^\dagger$ N/A: infinite-horizon \gls{acr:cbs} timed out after 5 minutes. $^\ddagger$ ERT = Execution Response Time (computation time before execution begins).  {For infinite-horizon \gls{acr:cbs}, planning is a one-shot solve; per-step statistics do not apply (``--'').}}}
    \vspace*{-3mm}
\end{table}

In practice, the ``right'' horizon depends on both the instance (size, density, topology) and the available time budget per control cycle.
Sparse environments may be well served by small~$H$, while dense ones often require larger~$H$ to avoid short-sighted decisions.
Since computation time for a fixed~$H$ can still vary widely across instances, choosing a single horizon offline is brittle, and trying multiple horizons in parallel is impractical for time-critical robotics.

To address this, we introduce \gls{acr:accbs}, which wraps finite-horizon \gls{acr:cbs} in an \emph{iterative deepening} scheme~\cite{lavalle2006planning}.
Instead of fixing~$H$ in advance, \gls{acr:accbs} maintains a \emph{running horizon}~$h_r$: at each time~$t$, it first solves with a small~$h_r$ to obtain a low-latency feasible movement~$\hat{u}_t$, and, while time remains and~$h_r < H_{\mathrm{max}}$, it increases~$h_r$ and refines the solution.
The planner can be interrupted at any point and returns the best movement found so far, so the effective horizon is \emph{adapted online} to the available time, giving \gls{acr:accbs} an \emph{anytime} character.
Although increasing the horizon seems expensive, a key contribution of this work is to show that full recomputation is unnecessary: in \cref{sec:active-prefix-tree-reuse}, we formalize an \emph{active prefix} and prove cost invariance (\cref{lem:cost-invariance}) and constraint-tree reuse (\cref{prop:tree-reuse}), so the overall complexity is comparable to running finite-horizon \gls{acr:cbs} once with horizon $H_{\max}$. 
This horizon-changing mechanism is the core of \gls{acr:accbs}.

\subsection{Active prefix and constraint-tree reuse} \label{sec:active-prefix-tree-reuse}
We now formalize the horizon-changing mechanism.
At a fixed time~$t$, let~$H_{\max}\in \mathbb{N}$ be a nominal planning horizon, and let the \emph{running horizon}~$h_r\in \{1,\ldots,H_{\max}\}$ specify how many initial time steps must be conflict-free;
conflicts beyond~$h_r$ are \emph{temporarily} ignored.
The key idea is to grow~$h_r$ incrementally, while reusing the same
constraint tree and node costs.

\paragraph{Active prefix}
For notational simplicity, fix a time~$t$ and write~$A=A^t$.
Consider a joint~$H_{\max}$-step trajectory~$\vtau^{A,H_{\max}}= \{\tau^{a_1,H_{\max}}_{\mathrm{finite}},\ldots,
        \tau^{a_N,H_{\max}}_{\mathrm{finite}}\}$, where for each agent~$\tau^{a_i,H_{\max}}_{\mathrm{finite}}
 = [v^{a_i}_0,\ldots,v^{a_i}_{H_{\max}}]$.

\begin{definition}[Running horizon and active prefix]
The \emph{active prefix} of the running horizon~$h_r$ associated with~$\vtau^{A,H_{\max}}$ is the joint trajectory~$\textstyle{\vtau^{A,h_r \mid H_{\max}}_{\mathrm{finite}}
        = \{\tau^{a_1,h_r \mid H_{\max}}_{\mathrm{finite}},\ldots,
            \tau^{a_N,h_r \mid H_{\max}}_{\mathrm{finite}}\}}$, 
    where for each agent~$a_i$,~$\tau^{a_i,h_r \mid H_{\max}}_{\mathrm{finite}}
        = [v^{a_i}_0,\ldots,v^{a_i}_{h_r}].$
\end{definition}

The active prefix is the portion of the planned trajectories on which conflict-freedom is currently enforced (see \cref{fig:example-active-prefix}).

\begin{figure}
    \centering
    \includegraphics[width=0.65\linewidth, trim=24cm 12.5cm 24cm 12.5cm, clip]{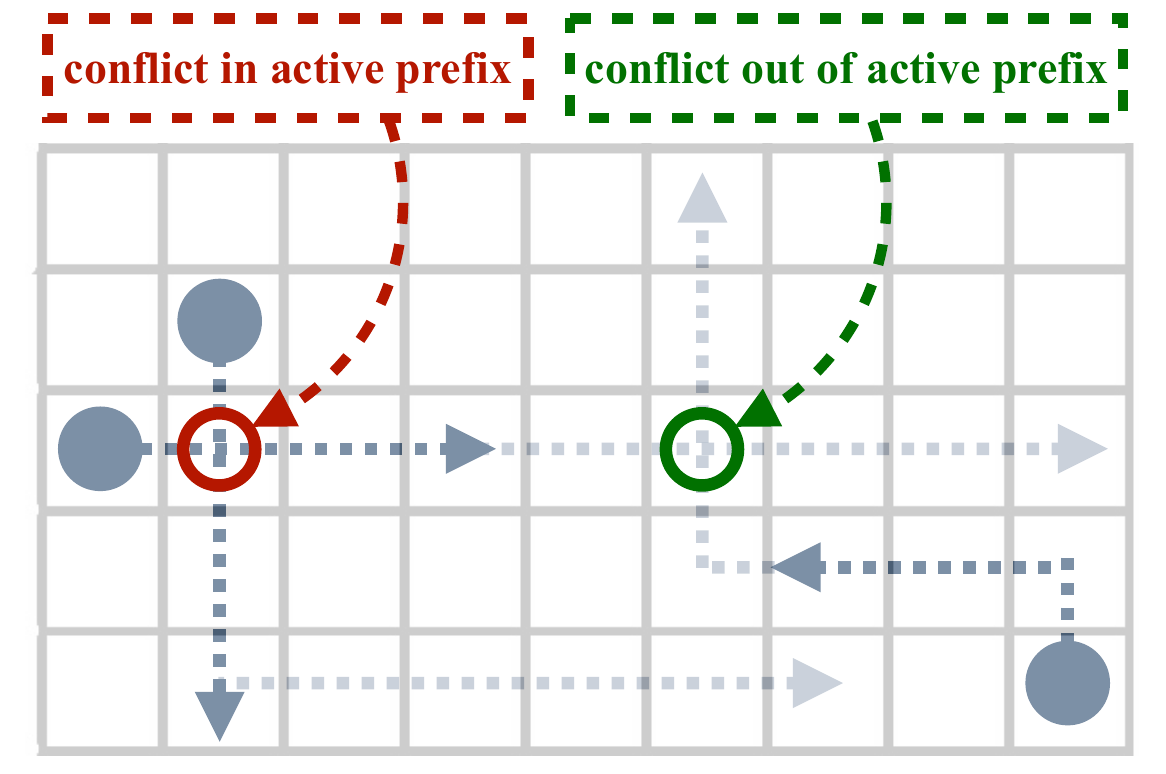}
    \caption{\textbf{Example of active prefix.} Only conflicts inside the active prefix (shaded region) are resolved; conflicts in the tail are ignored until the running horizon grows to cover them.}
    \label{fig:example-active-prefix}
    \vspace*{-3mm}
\end{figure}

In \gls{acr:accbs}, the low-level search at time~$t$ still operates on~$H_{\max}$-step trajectories as in finite-horizon \gls{acr:cbs}, but for a given running horizon~$h_r$ we only detect and resolve conflicts within the active prefix~$\vtau^{A,h_r \mid H_{\max}}_{\mathrm{finite}}$.
Once a node with a conflict-free active prefix is found, we extract the first-step control~$\hat{u}_t$ from that node. 
If time permits, we increase $h_r$ and continue the search, now resolving conflicts up to the new horizon. Growing~$h_r$ realizes the iterative-deepening behavior of \cref{sec:horizon-changing} without restarting the search.

The efficiency of this procedure hinges on two key properties:
i) node costs do not change when we increase~$h_r$, and ii) the constraint tree built for a smaller running horizon can be fully reused when~$h_r$ is enlarged.

\paragraph*{Cost invariance}
Recall that each node~$n$ (\cref{def:node}) stores a joint trajectory~$\vtau^{A,H_{\max}}_{\mathrm{finite}}(n)$ and a cost~$J_{H_{\max}}(n)$ defined by \cref{def:cost-function}.
For any~$h \in \{1,\dots,H_{\max}\}$ we also define the
\emph{prefix cost}~$\textstyle{
    J_h(n)
    = \sum_{a_i \in A}
      \left(
        \sum_{\ell=0}^{h-1} p^{a_i}(v^{a_i}_\ell)
        + \gamma\bigl(v^{a_i}_h, a_i\bigr)
      \right)},$
where~$[v^{a_i}_0,\ldots,v^{a_i}_{H_{\max}}]$ is the trajectory of
agent~$a_i$ stored in node~$n$.

{We assume the low-level planner has the following property}: for each node~$n$ and agent~$a_i$, beyond the largest time index appearing in~$C(n)$, the trajectory of~$a_i$ follows a shortest path to its goal (and then waits).
In particular, if~$n$ is created while the running horizon is~$h$, then its constraints only involve times~$\le h$, and the suffix~$[v^{a_i}_h,\ldots,v^{a_i}_{H_{\max}}]$ is a shortest path to~$\rho_g^t(a_i)$ (or stays at the goal).

\begin{lemma}[Cost invariance under horizon extension]
    \label{lem:cost-invariance}
Let~$n$ be a node generated while the running horizon is~$h$, and assume that for each agent~$a_i$ the suffix~$[v^{a_i}_h,\ldots,v^{a_i}_{H_{\max}}]$ follows a shortest path to~$\rho_g^t(a_i)$.
Then~$J_h(n) = J_{h+1}(n).$
Consequently,~$J_h(n) = J_{H_{\max}}(n)$ for all~$h \in \{1,\dots,H_{\max}\}$.
\end{lemma}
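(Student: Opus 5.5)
The plan is a direct per-agent computation. Since both $J_h(n)$ and $J_{h+1}(n)$ are sums over agents, it suffices to show that for each fixed agent $a_i$
\begin{equation*}
\sum_{\ell=0}^{h-1} p^{a_i}(v^{a_i}_\ell) + \gamma\bigl(v^{a_i}_h,a_i\bigr) = \sum_{\ell=0}^{h} p^{a_i}(v^{a_i}_\ell) + \gamma\bigl(v^{a_i}_{h+1},a_i\bigr).
\end{equation*}
Subtracting the common running-cost terms, this reduces to the one-step identity $\gamma(v^{a_i}_h,a_i) = p^{a_i}(v^{a_i}_h) + \gamma(v^{a_i}_{h+1},a_i)$. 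This is a Bellman-type equation along the suffix. It is the only real content of the lemma.

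To verify it, I split into two cases according to whether $v^{a_i}_h = \rho_g^t(a_i)$.

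If $v^{a_i}_h$ is the goal, then $p^{a_i}(v^{a_i}_h)=0$ and $\gamma(v^{a_i}_h,a_i)=0$. By the hypothesis (the suffix "stays at the goal" once reached), $v^{a_i}_{h+1}$ is also the goal, so $\gamma(v^{a_i}_{h+1},a_i)=0$, and both sides vanish.

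If $v^{a_i}_h$ is not the goal, then $p^{a_i}(v^{a_i}_h)=1$. Since $[v^{a_i}_h,\ldots,v^{a_i}_{H_{\max}}]$ follows a shortest path to the goal, the step $(v^{a_i}_h,v^{a_i}_{h+1})$ is the first edge of a shortest path. By the standard subpath-optimality property of shortest paths in $G^t$ (unit edge lengths), $\gamma(v^{a_i}_{h+1},a_i)=\gamma(v^{a_i}_h,a_i)-1$, which gives the identity.

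The main subtlety is making this second case precise. "Follows a shortest path" must mean that each step strictly decreases $\gamma$ by one until the goal is reached, and then waits. In particular, the agent must not make a wait step off the goal. I would state this reading of the hypothesis explicitly, and also note that it requires $h+1\le H_{\max}$.

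For the "consequently" part, I need the hypothesis at every intermediate index. The suffix from $h+1$ onward is a tail of a shortest path, hence itself a shortest path (or a goal-wait). So the hypothesis holds with $h$ replaced by any $h'\ge h$, and induction up to $H_{\max}$ gives $J_h(n)=J_{H_{\max}}(n)$.

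For $h'<h$, the statement as written (all $h\in\{1,\dots,H_{\max}\}$) would additionally need the prefix before $h$ to be shortest-path-like, which the hypothesis does not provide. I would therefore interpret the conclusion as holding for all running horizons $h'\ge h$ at which the node is evaluated. That is exactly what the algorithm needs, since $h_r$ only grows.
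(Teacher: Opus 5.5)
Your proof is correct and matches the paper's argument: reduce to the per-agent one-step identity $\gamma(v^{a_i}_h,a_i) = p^{a_i}(v^{a_i}_h) + \gamma(v^{a_i}_{h+1},a_i)$, verify it by splitting on whether $v^{a_i}_h$ is the goal, then iterate. Your extra remarks are accurate and make explicit points the paper leaves implicit: the tail of a shortest path is again shortest, which is what licenses the iteration, and equality with $J_{H_{\max}}(n)$ is only guaranteed for horizons at or above the one at which $n$ was generated.
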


\begin{proof}
Consider a single agent~$a_i$ with trajectory~$\tau^{a_i,H_{\max}}_{\mathrm{finite}}
     = [v^{a_i}_0,\ldots,v^{a_i}_{H_{\max}}]$ in~$n$.
Its contributions to~$J_h(n)$ and~$J_{h+1}(n)$ are~{$\mathcal{C}_h^{a_i}
        = \textstyle\sum_{\ell=0}^{h-1} p^{a_i}(v^{a_i}_\ell)
           + \gamma(v^{a_i}_h,a_i),$~$\mathcal{C}_{h+1}^{a_i}
        = \textstyle\sum_{\ell=0}^{h} p^{a_i}(v^{a_i}_\ell)
           + \gamma(v^{a_i}_{h+1},a_i).$} 
Thus~$\mathcal{C}_{h+1}^{a_i} - \mathcal{C}_h^{a_i}
        = p^{a_i}(v^{a_i}_h)
          + \gamma(v^{a_i}_{h+1},a_i)
          - \gamma(v^{a_i}_h,a_i).$
By construction of the suffix, either  
(i)~$v^{a_i}_h \neq \rho_g^t(a_i)$, so~$p^{a_i}(v^{a_i}_h)=1$ and~$\gamma(v^{a_i}_h,a_i) = 1 + \gamma(v^{a_i}_{h+1},a_i)$, or  
(ii)~$v^{a_i}_h = \rho_g^t(a_i)$, so~$p^{a_i}(v^{a_i}_h)=0$ and~$\gamma(v^{a_i}_h,a_i) = \gamma(v^{a_i}_{h+1},a_i) = 0$.
In both cases the difference is 0, so~$\mathcal{C}_{h+1}^{a_i} = \mathcal{C}_h^{a_i}$.
Summing over all agents yields~$J_{h+1}(n)=J_h(n)$, and repeating this
argument gives~$J_h(n)=J_{H_{\max}}(n)$ for all~$h$.
\end{proof}

\cref{lem:cost-invariance} shows that enlarging the running horizon does not change the cost of nodes generated at smaller horizons: defining the node cost as~$J_{H_{\max}}(n)$ is consistent with any intermediate~$J_h(n)$ and therefore safe for tree reuse.

\paragraph*{Constraint-tree reuse}
Let~$\mathcal{T}_h$ be the constraint tree for running horizon~$h$: nodes are created by resolving conflicts within the active prefix of length~$h$, and their costs are~$J_h$.
Let~$\mathcal{T}_{h+1}$ be the tree for running horizon~$h+1$.

\begin{proposition}[Constraint-tree reuse]
    \label{prop:tree-reuse}
Fix~$h \in \{1,\dots,H_{\max}-1\}$ and assume the low-level planner extends trajectories beyond the largest constrained time index by following shortest paths to the goals, as in \cref{lem:cost-invariance}. 
Then:
a) Every node~$n \in \mathcal{T}_h$ is also a valid node in~$\mathcal{T}_{h+1}$ with the same constraint set and cost,~$C(n)$ unchanged, and~$J_{h+1}(n) = J_h(n)$.
b) The parent–child relations induced by resolving conflicts at times~$\le h$ are identical in~$\mathcal{T}_h$ and~$\mathcal{T}_{h+1}$, so~$\mathcal{T}_h$ is a rooted subtree of~$\mathcal{T}_{h+1}$.
\end{proposition}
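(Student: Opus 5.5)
The plan is to reduce both parts to two facts: node validity depends only on the constraint set and the stored trajectories, and costs are invariant under horizon extension by \cref{lem:cost-invariance}.

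\textbf{Part a).} Take an arbitrary node $n\in\mathcal{T}_h$ and check the conditions of \cref{def:node} in $\mathcal{T}_{h+1}$.

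\emph{Trajectories and constraints.} The stored joint trajectory $\vtau^{A,H_{\max}}_{\mathrm{finite}}(n)$ consists of $H_{\max}$-step trajectories starting at $x_t$. This holds regardless of the running horizon, since \textsc{IndPlan} always plans over $H_{\max}$ steps. The constraint set $C(n)$ contains only vertex and edge constraints with time indices at most $h\le h+1$, so every constraint is admissible at running horizon $h+1$. The trajectories already satisfy $C(n)$, because satisfaction does not depend on the running horizon. Hence $n$ is a valid node in $\mathcal{T}_{h+1}$ with $C(n)$ unchanged.

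\emph{Cost.} By the low-level assumption, beyond the largest constrained time (at most $h$), each agent follows a shortest path to its goal and then waits. So the suffix $[v^{a_i}_h,\dots,v^{a_i}_{H_{\max}}]$ is a shortest path, and \cref{lem:cost-invariance} gives $J_{h+1}(n)=J_h(n)=J_{H_{\max}}(n)$. The priority value of $n$ in $\mathrm{OPEN}$ therefore needs no update.

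\textbf{Part b).} I would argue by induction on depth in $\mathcal{T}_h$.

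\emph{Base case.} The root has $C=\emptyset$ and unconstrained individual plans. These are shortest paths, so the root is the same in both trees.

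\emph{Inductive step.} Suppose a node $n$ lies in both trees, and in $\mathcal{T}_h$ it was expanded on a conflict at time $t_c\le h$. That conflict also lies inside the active prefix of length $h+1$, so it is a legitimate conflict to branch on in $\mathcal{T}_{h+1}$. \textsc{NewConstr} is deterministic given the conflict and the agent, so the two children have the same constraint sets. \textsc{IndPlan} is deterministic given $(\cI_t,x_t,H_{\max},C)$, so the children have the same trajectories; part a) then gives them the same costs. Therefore each parent–child edge of $\mathcal{T}_h$ is an edge of $\mathcal{T}_{h+1}$, and $\mathcal{T}_h$ embeds as a rooted subtree. Nodes of $\mathcal{T}_h$ left unexpanded, or found conflict-free only on the first $h$ steps, become frontier nodes of $\mathcal{T}_{h+1}$ that may be expanded further on conflicts at time $h+1$.

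\textbf{Main obstacle.} The subtle point is the meaning of ``identical'' in b), since \textsc{FindConflict} in $\mathcal{T}_{h+1}$ could in principle choose a different conflict at the same node, for example one at time $h+1$. I would resolve this by reading $\mathcal{T}_{h+1}$ as the tree obtained by continuing the search, where the branching choices already made are kept: any conflict at time $\le h$ is also a valid branching choice under horizon $h+1$, so the retained tree is a valid $\mathcal{T}_{h+1}$ prefix. I would also state this determinism and consistency of the conflict choice explicitly as an assumption.
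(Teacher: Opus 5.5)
Your proposal is correct and follows essentially the same route as the paper. The paper argues that $C(n)$ stays valid because the conflicts at times $\le h$ and their resolving constraints are unchanged, gets equal costs from \cref{lem:cost-invariance}, and keeps parent--child edges because they arise only from those unchanged conflicts. Your explicit induction on depth and the determinism/continued-search reading of $\mathcal{T}_{h+1}$ make rigorous what the paper's proof only asserts, namely that branching choices already made are retained when the horizon grows.
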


\begin{proof}
By construction, nodes in~$\mathcal{T}_h$ are generated by repeatedly resolving conflicts that occur within the first $h$ time steps. Increasing the running horizon to~$h+1$ does not change these conflicts or the constraints used to resolve them, so the constraint set~$C(n)$ of every existing node~$n$ remains valid. 
New conflicts may appear only at time $h+1$, and these produce only new children of existing nodes.
By \cref{lem:cost-invariance}, for every node~$n$ generated while the running horizon was~$h$ we have~$J_{h+1}(n) = J_h(n)$. 
Thus all nodes of~$\mathcal{T}_h$ retain the same costs and remain admissible search nodes when the horizon is~$h+1$. 
Since edges in~$\mathcal{T}_h$ are created solely by resolving conflicts at times~$\le h$, and these conflicts are unchanged, the parent–child relations among nodes of~$\mathcal{T}_h$ are identical in~$\mathcal{T}_{h+1}$. Hence~$\mathcal{T}_h$ is a rooted subtree of~$\mathcal{T}_{h+1}$, with the same node costs, which proves the claim.
\end{proof}

\cref{prop:tree-reuse} shows that extending the running horizon preserves previous search effort: the search for~$h+1$ continues directly from~$\mathcal{T}_h$ without restart, so node expansions and low-level calls are comparable to a single finite-horizon \gls{acr:cbs} run with horizon~$H_{\max}$, while intermediate solutions for shorter horizons are exposed along the way (anytime behavior); \cref{sec:accbs-details} gives the operational details. 

\subsection{Details of \gls{acr:accbs} algorithm} \label{sec:accbs-details}

\makeatletter
\xpatchcmd{\algorithmic}{\itemsep\z@}{\itemsep=0.5pt}{}{}
\makeatother

\begin{algorithm}[t]
\resizebox{\columnwidth}{!}{
\begin{minipage}{\columnwidth}
    \small
    \caption{Anytime closed-loop \gls{acr:cbs} algorithm}
    \label{alg:accbs}
    \begin{algorithmic}[1]
    \Require Current instance $\cI_t$, state $x_t$, maximum horizon $H_{\max}$, time limit $t_{\max}$
    \State $h_r \gets 1$, $\mathrm{OPEN} \gets \mathsf{EmptyQueue}$
    \State $N_{\mathrm{root}}.\mathrm{Constr} \gets \emptyset$
    \State $N_{\mathrm{root}}.\vtau_{\mathrm{finite}}^{A^t,H_{\max}} \gets \Call{IndPlan}{\cI_t,x_t,H_{\max},N_{\mathrm{root}}.\mathrm{Constr}}$
    \State $\mathrm{OPEN}.\mathbf{insert}(N_{\mathrm{root}})$, $\vtau_{\mathrm{best}}^{A^t,H_{\max}} \gets \emptyset$
    \While{\textbf{not} $\Call{TimeOut}{t_{\max}}$ \textbf{and} $h_r \le H_{\max}$}
        \State $n \gets \Call{PrefixCBS}{\cI_t,x_t,H_{\max},h_r,\mathrm{OPEN},t_{\max}}$
        \If{$n=\emptyset$}
            \State \textbf{break}
        \EndIf
        \State $\vtau_{\mathrm{best}}^{A^t,H_{\max}} \gets n.\vtau_{\mathrm{finite}}^{A^t,H_{\max}}$
        \If{$h_r = H_{\max}$}
            \State \textbf{break}
        \EndIf
        \State $h_r \gets h_r+1$
        \State $\mathrm{OPEN}.\mathbf{insert}(n)$ \Comment{Reuse under larger active prefix.}
    \EndWhile
    \If{$\vtau_{\mathrm{best}}^{A^t,H_{\max}}=\emptyset$}
        \State \Return $\Call{PIBT}{\cI_t,x_t}$
    \EndIf
    \State \Return $\Call{ExtractFirstStepMovement}{\vtau_{\mathrm{best}}^{A^t,H_{\max}}}$
    \end{algorithmic}
\end{minipage}
}
\end{algorithm}

{
\myparagraph{Initialization and horizon adaptation} --
\gls{acr:accbs} starts with~$h_r=1$ and a root node containing unconstrained individually optimal~$H_{\max}$-step trajectories, ordered in~$\mathrm{OPEN}$ by~$J_{H_{\max}}$.
For the current~$h_r$, \textsc{PrefixCBS} performs best-first CBS expansion but detects conflicts only inside the active prefix.
If it returns a prefix-feasible node~$n$, the node becomes the incumbent; if~$h_r < H_{\max}$ and time remains, \gls{acr:accbs} increments~$h_r$, reinserts~$n$, and continues on the same tree.
By~\cref{lem:cost-invariance,prop:tree-reuse}, this horizon increase neither changes existing node costs nor invalidates existing parent--child relations.
The loop stops when a node is conflict-free for~$h_r=H_{\max}$ or the time budget expires; the first-step movement of the latest incumbent is returned, with PIBT used only if no $h_r=1$ incumbent is found.
}

\begin{remark}[Connection to PIBT and one-step optimization]
{When the $h_r=1$ prefix search completes,}
\gls{acr:accbs} returns a conflict-free next step that is \emph{globally optimal} w.r.t. the cost in \cref{def:cost-function}.
In contrast, PIBT~\cite{okumura2022priority} uses heuristic priority rules. Although Anytime PIBT optimizes the single-step transition (thus matching \gls{acr:accbs} at $h_r=1$), such one-step optimization is inherently myopic~\cite{gandotra2025anytime}. 
\gls{acr:accbs} mitigates this by iteratively extending the horizon, enabling multi-step coordination and substantially improved performance.
\end{remark}

\subsection{Discussion and theoretical analysis}
We analyze \gls{acr:accbs} in the one-shot \gls{acr:mapf} setting without uncertainty: a static instance~$\cI_0$, perfect actuation (executed and planned movements coincide), and a static environment.
The task terminates when all agents reach their goals and remain there, and performance is evaluated by the standard \gls{acr:soc}, often used for one-shot \gls{acr:mapf} and interpretable as energy consumption~\cite{surynek2016empirical}.

\begin{definition}[\gls{acr:soc}]
Let~$\vtau^{A}_{\mathrm{sol}} = \{\traj^{a_1}_{\mathrm{sol}},\ldots,\traj^{a_N}_{\mathrm{sol}}\}$ be solution trajectories of the one-shot \gls{acr:mapf} problem, where~$\traj_{\mathrm{sol}}^{a_i}=[v^{a_i}_0, \ldots, v^{a_i}_M]$ is the trajectory of agent~$a_i$. 
The \gls{acr:soc} is
    \[ \textstyle
        \mathrm{SOC}(\vtau^{A}_{\mathrm{sol}}) 
        = \sum_{a_i} \mathrm{SOC}(\traj_{\mathrm{sol}}^{a_i})
        = \sum_{a_i} \sum_t p^{a_i}(v^{a_i}_t),
    \]
    where~$p^{a_i}$ is the running cost from \cref{def:cost-function}. 
\end{definition}

As noted in \cref{def:cost-function}, if the planning horizon~$H_{\max}$ is at least the makespan of a solution, the finite-horizon cost~$J_{H_{\max}}$ coincides with the \gls{acr:soc}.

We consider an idealized regime in which the time limit~$t_{\max}$ in \cref{alg:accbs} is infinite, so the running horizon grows until it reaches~$H_{\max}$.

\begin{theorem}[Anytime behavior, completeness, and optimality of \gls{acr:accbs}]
\label{thm:accbs-theory}
Consider a one-shot \gls{acr:mapf} instance, and choose~$H_{\max}$ at least as large as the makespan of some \gls{acr:soc}-optimal solution. 
Run \gls{acr:accbs} with this~$H_{\max}$ and no time limit. 
Then:
(1) (\emph{Anytime property}) As runtime increases, the cost of the best solution stored in~$\vtau_{\mathrm{best}}^{A,H_{\max}}$ is 
{monotonically non-decreasing, bounded above by the \gls{acr:soc}-optimal cost, and converges to a finite limit}; at any time the algorithm can return the first-step movement of the best solution found so far.
(2) (\emph{Completeness \& optimality}) The algorithm terminates in finite time and returns a conflict-free joint trajectory of length~$H_{\max}$ that is globally optimal with respect to~$J_{H_{\max}}$, and therefore \gls{acr:soc}-optimal for the \gls{acr:mapf} instance under the stated~$H_{\max}$.
\end{theorem}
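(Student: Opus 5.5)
The argument is the standard best-first correctness proof for \gls{acr:cbs}, applied to a single constraint tree $\mathcal{T}_{H_{\max}}$ that \gls{acr:accbs} explores in stages with a growing running horizon. By \cref{prop:tree-reuse}, the trees $\mathcal{T}_1\subseteq\mathcal{T}_2\subseteq\cdots\subseteq\mathcal{T}_{H_{\max}}$ are nested rooted subtrees. By \cref{lem:cost-invariance}, each node keeps the key $J_{H_{\max}}(n)$ throughout. Hence $\mathrm{OPEN}$ is a single best-first queue over one tree, and the only difference from plain finite-horizon \gls{acr:cbs} with horizon $H_{\max}$ is which conflicts \textsc{FindConflict} reports at each stage. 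I would first establish the usual CBS covering invariant and then derive (1) and (2) from it.

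\textbf{Step 1 (covering invariant).} Let $\vtau^\star$ be an \gls{acr:soc}-optimal solution whose makespan is at most $H_{\max}$, padded with waits up to length $H_{\max}$. Then $J_{H_{\max}}(\vtau^\star)=\mathrm{SOC}^\star$, since the terminal costs vanish at the goals. Call a node $n$ \emph{consistent} if $\vtau^\star$ satisfies $C(n)$. I will show that at every moment some node in $\mathrm{OPEN}$ is consistent.
\begin{itemize}
\item The root is consistent.
\item When a consistent node is expanded on a conflict between $a_i$ and $a_j$, the two new constraints cannot both be violated by the conflict-free $\vtau^\star$, so at least one child is consistent.
\item Reinserting the incumbent in \cref{alg:accbs} puts it back into $\mathrm{OPEN}$, so the invariant is preserved when $h_r$ grows.
\end{itemize}
Next I need that a consistent node $n$ satisfies $J_{H_{\max}}(n)\le \mathrm{SOC}^\star$. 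This follows because \textsc{IndPlan} returns, for each agent, a minimum-cost $H_{\max}$-step trajectory under $C(n)$; I take this from \cref{rem:lowlevel}, using constrained A* with a perfect heuristic. Since $\tau^{\star,a_i}$ is feasible under $C(n)$, each agent's cost is at most that of $\tau^{\star,a_i}$.

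\textbf{Step 2 (anytime property).} For each stage $h_r$, the returned node is dequeued with the minimum key in $\mathrm{OPEN}$. By Step 1, that key is at most $\mathrm{SOC}^\star$. The keys of successive incumbents are non-decreasing for two reasons:
\begin{itemize}
\item children cost at least as much as their parents, since adding a constraint cannot lower the constrained optimum;
\item best-first dequeue order with a reinserted incumbent never returns a strictly cheaper node later.
\end{itemize}
The keys therefore form a monotone sequence bounded by $\mathrm{SOC}^\star$, and there are at most $H_{\max}$ incumbents, so the sequence converges (in fact it is finite). The first-step extraction is available at every stage, which gives the anytime claim.

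\textbf{Step 3 (termination and optimality).} This is the main obstacle. I would argue termination stage by stage. Within a fixed $h_r$, only nodes with key at most $\mathrm{SOC}^\star$ are ever dequeued, because a consistent node with key at most $\mathrm{SOC}^\star$ always sits in $\mathrm{OPEN}$. There are finitely many constraint sets with times at most $H_{\max}$ over finite $V,E,A$, so a node with key at most $\mathrm{SOC}^\star$ has finite depth and the set of such nodes is finite. Each stage therefore ends after finitely many expansions, and there are $H_{\max}$ stages.

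At the final stage $h_r=H_{\max}$, the returned node is conflict-free on all $H_{\max}$ steps and has key at most $\mathrm{SOC}^\star$. Its trajectories are feasible for the instance, so its cost is also at least the optimal $J_{H_{\max}}$ over conflict-free joint trajectories, which equals $\mathrm{SOC}^\star$ once $H_{\max}$ is at least the optimal makespan. Equality follows.

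The delicate points are two. First, \textsc{IndPlan} must be exactly optimal under constraints, which is a stronger property than the shortest-path-suffix assumption used in \cref{lem:cost-invariance}. Second, the finiteness argument must handle duplicate constraint sets. I would address the latter by noting that each branch adds a constraint not already in the set, because the current node violates it, so depth is bounded by the number of possible constraints.
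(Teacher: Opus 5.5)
There is a genuine gap in Step 3, at ``its trajectories are feasible for the instance, so its cost is also at least the optimal $J_{H_{\max}}$ over conflict-free joint trajectories, which equals $\mathrm{SOC}^\star$.'' A node that is conflict-free on $\{0,\dots,H_{\max}\}$ need not be a solution: agents need not be on their goals at time $H_{\max}$, and the terminal cost $\gamma$ is a single-agent distance that ignores the other agents. So the minimum of $J_{H_{\max}}$ over such joint trajectories can be strictly below $\mathrm{SOC}^\star$; your Step 1 only gives the upper bound.

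Here is a concrete instance. Take the path $s_b$--$g_1$--$g_b$ with the chain $g_1$--$g_2$--$g_3$--$f$ attached at $g_1$. Agents $a_1,a_2,a_3$ start on their goals $g_1,g_2,g_3$, and agent $b$ goes from $s_b$ to $g_b$. Shift the chain one step toward $f$ while $b$ enters $g_1$, then shift it back while $b$ enters $g_b$. This is a solution with SOC $5$ and makespan $2$. A short case check shows nothing is cheaper: $a_1$ must vacate $g_1$ for $b$, and yielding into $s_b$ or into the dead end $g_b$ forces a swap with $b$, so the whole chain must shift. With $H_{\max}=2$, the all-wait joint trajectory is conflict-free with $J_2=1+1+\gamma(s_b,b)=4<5$. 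In fact every conflict-free joint trajectory with $J_2\le 4$ begins with the all-wait move. So ACCBS returns a non-solution whose first step is to wait, and the closed loop never moves. Thus ``equality follows'' is false, and the SOC-optimality clause of (2) cannot be obtained under the stated hypothesis. The paper's proof makes the same leap: ``$J_{H_{\max}}$ equals SOC for all solutions with makespan at most $H_{\max}$'' applies only to nodes that happen to be solutions.

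Two repairs are needed. First, your route to $J_{H_{\max}}$-optimality also passes through that false equality. Instead, run the Step~1 covering invariant against an arbitrary joint $H_{\max}$-step trajectory $\vtau'$ that is conflict-free on $\{0,\dots,H_{\max}\}$. Call a node consistent if $\vtau'$ satisfies its constraints; consistent keys are then at most $J_{H_{\max}}(\vtau')$. This gives $J_{H_{\max}}$-optimality of the returned node directly, and it is what the paper's appeal to standard CBS amounts to.

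Second, SOC-optimality needs a stronger hypothesis, and $H_{\max}\ge\mathrm{SOC}^\star$ suffices by a pigeonhole count. Suppose a conflict-free $\vtau$ had $J_{H_{\max}}(\vtau)<\mathrm{SOC}^\star\le H_{\max}$. Its running cost would be below $H_{\max}$, so at some $t<H_{\max}$ every agent would be on its goal. Truncating at $t$ and waiting would then give a solution cheaper than $\mathrm{SOC}^\star$, a contradiction. The same count shows that a minimizer of value $\mathrm{SOC}^\star$ ends with every agent on its goal.

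Apart from this, your Steps~1 and~2 are sound. For (1) you use monotonicity of dequeued keys, whereas the paper identifies each incumbent with the horizon-$h_r$ optimum over a shrinking feasible set. Both need the exactly optimal low-level planner you flag.
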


\begin{proof}
{For (2), once~$h_r=H_{\max}$, \gls{acr:accbs} coincides with standard \gls{acr:cbs} on horizon~$H_{\max}$ and inherits its completeness and optimality with respect to~$J_{H_{\max}}$. 
Since~$J_{H_{\max}}$ equals \gls{acr:soc} for all solutions with makespan at most~$H_{\max}$, the returned trajectory is \gls{acr:soc}-optimal.
For (1), fix a running horizon~$h_r$. The algorithm performs best-first search over a finite constraint tree, so the first conflict-free node updates~$\vtau_{\mathrm{best}}^{A,H_{\max}}$ with cost~$C^\star_{h_r}$, the horizon-$h_r$ optimum. 
As~$h_r$ grows, the active prefix expands, more conflicts must be resolved, and the feasible set shrinks; hence~$C^\star_{h_r}$ is non-decreasing. 
Moreover, any \gls{acr:soc}-optimal solution is feasible for every~$h_r \leq H_{\max}$, so~$C^\star_{h_r}$ is bounded above by the \gls{acr:soc}-optimal cost, equal to~$C^\star_{\infty}$. 
Thus the incumbent cost is monotone, bounded, and convergent, while its first-step movement can be returned at any time (anytime property).}
\end{proof}

\begin{figure}[t]
    \centering
    \subfloat[\href{https://movingai.com/benchmarks/mapf/empty-8-8.png}{\texttt{Small Empty Map}}]{\includegraphics[width=0.5\linewidth, trim=0cm 0.2cm 0cm 0.2cm, clip]{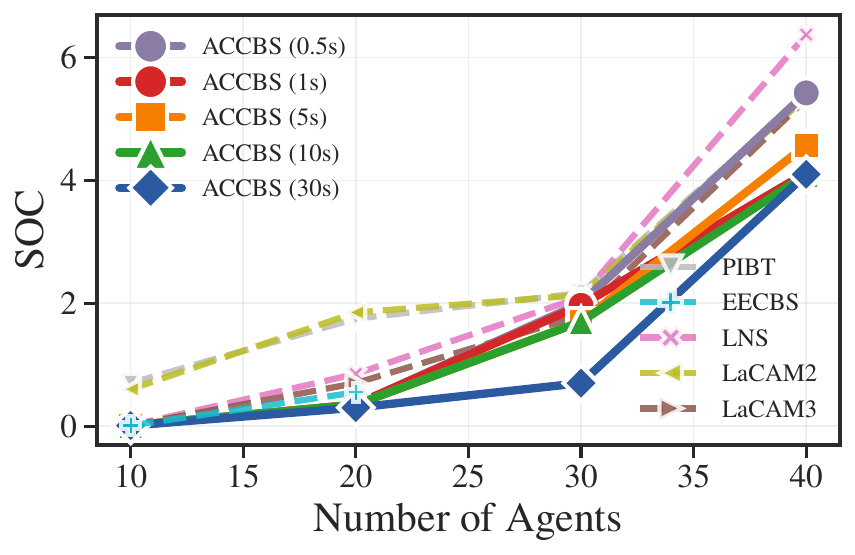}} \hfill
    \subfloat[\href{https://movingai.com/benchmarks/mapf/random-32-32-10.png}{\texttt{Small Random Map}}]{\includegraphics[width=0.5\linewidth, trim=0cm 0.2cm 0cm 0.2cm, clip]{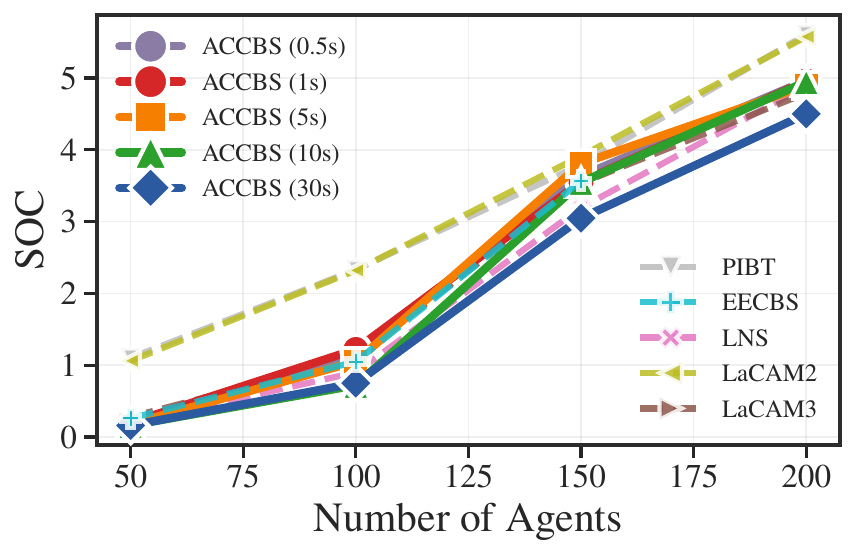}}
    \\
    \subfloat[\href{https://movingai.com/benchmarks/mapf/empty-48-48.png}{\texttt{Large Empty Map}}]{\includegraphics[width=0.5\linewidth, trim=0cm 0.2cm 0cm 0.2cm, clip]{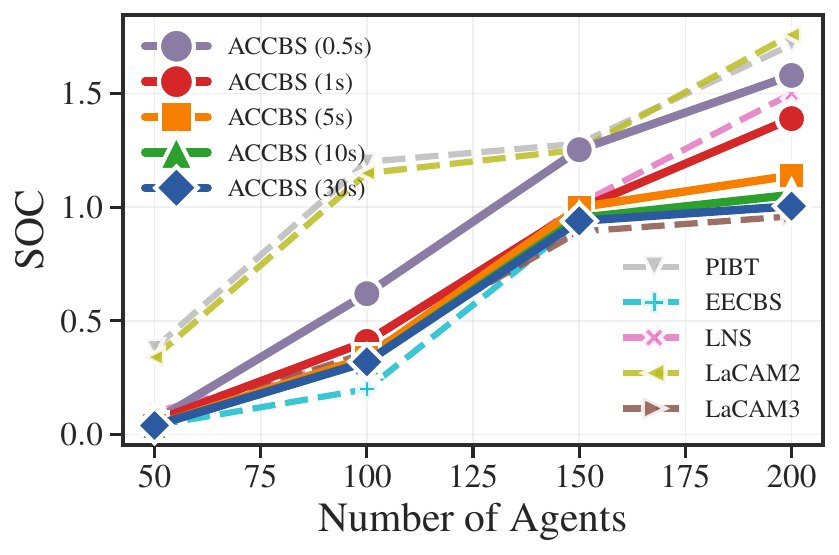}} \hfill
    \subfloat[\href{https://movingai.com/benchmarks/mapf/random-64-64-10.png}{\texttt{Large Random Map}}]{\includegraphics[width=0.5\linewidth, trim=0cm 0.2cm 0cm 0.2cm, clip]{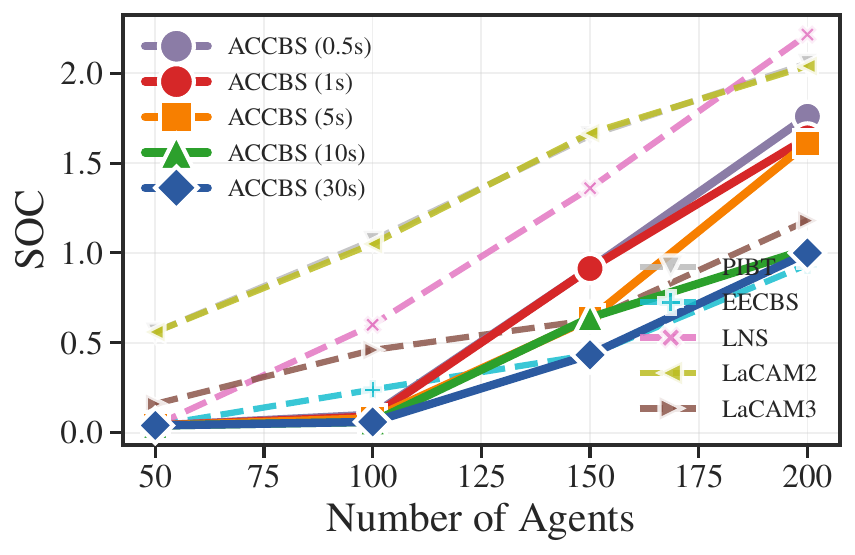}} 
    \\
    \subfloat[\href{https://movingai.com/benchmarks/mapf/index.html}{\texttt{Warehouse Map}}]{\includegraphics[width=0.5\linewidth, trim=0cm 0.2cm 0cm 0.2cm, clip]{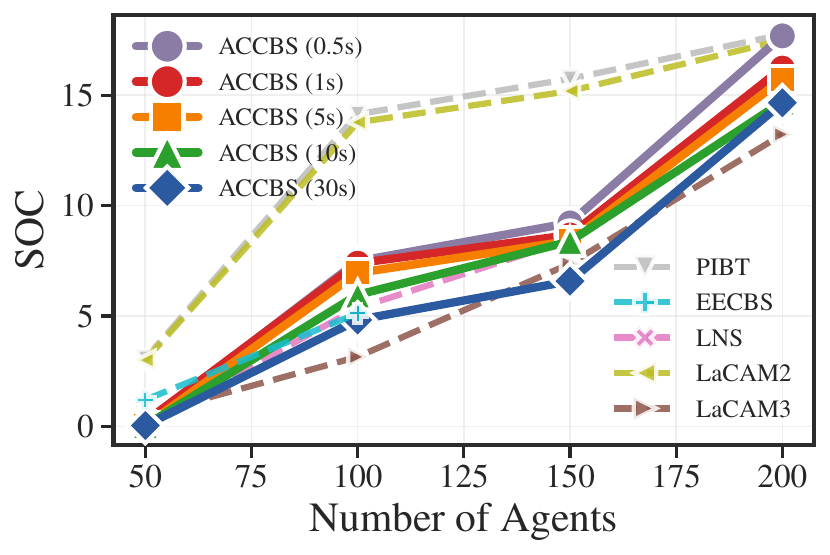}}
    \hfill
    \subfloat[\href{https://movingai.com/benchmarks/mapf/room-32-32-4.png}{\texttt{Room Map}}]{\includegraphics[width=0.5\linewidth, trim=0cm 0.2cm 0cm 0.2cm, clip]{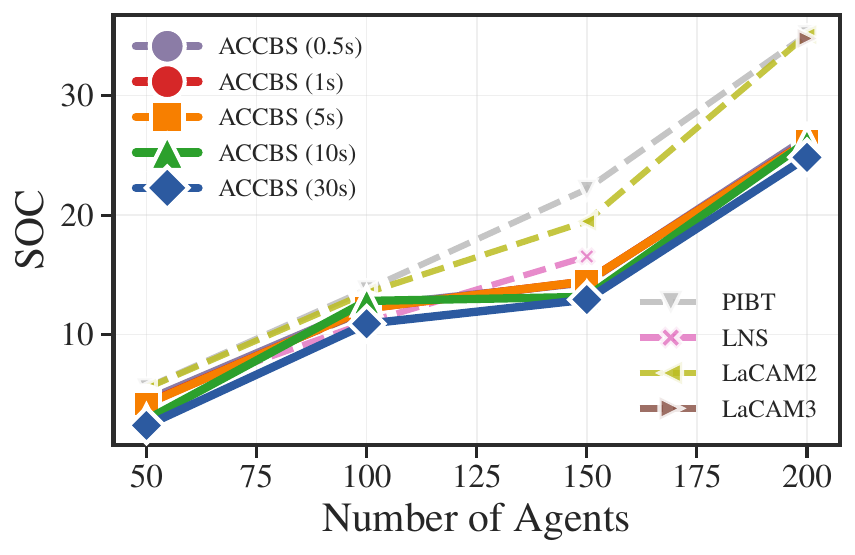}} 
    
    \caption{When the time budget increases, \gls{acr:accbs} can produce better \gls{acr:soc} performance and approach the optimum, which is very expensive to compute. {\gls{acr:accbs} yields competitive solution quality compared with multiple baselines, and its anytime design ensures a 100\% success rate by construction.
    }}
    \label{fig:one-shot-exp}
   \vspace*{-5mm}
\end{figure}

\begin{remark}[Single-step feasibility]
    \label{rem:recursive-feasibility}
     We claim feasibility in the following sense: at every timestep, a collision-free single-step command exists and is returned; 
    Although finite-horizon planning ignores conflicts beyond~$h_r$, \gls{acr:accbs} executes only the first step of a node whose active prefix, with~$h_r \geq 1$, is conflict-free, so the action is conflict-free by construction. 
    Such a node always exists: by \cref{def:mapf-inst} the graph is \emph{reflexive}, 
    the joint all-wait action is conflict-free and constitutes a conflict-free active prefix for any~$h_r \ge 1$. 
    Tail conflicts are reconsidered at the next step, when \gls{acr:accbs} replans from the updated state. 
    Applying this argument at each step ensures single-step feasibility throughout the closed-loop execution.
\end{remark}

\begin{remark}[Performance vs. time budget] 
\label{rem:perf-vs-budget}
As in \gls{acr:mpc}~\cite{bachtiar2016continuity}, strict monotonic improvement of trajectory quality with time budget (horizon) is not guaranteed for \gls{acr:accbs}. 
Nevertheless, increasing horizon reduces suboptimality in finite-horizon control~\cite{lavalle2006planning,grne2013nonlinear}: larger horizons mitigate myopia, letting the finite-horizon cost better approximate the true value function and typically yielding empirically better trajectories.
\end{remark}

\section{Experiments} \label{sec:experiments}
{We evaluate \gls{acr:accbs} in three settings of the unified \gls{acr:mapf} problem: one-shot planning, lifelong goal assignment, and stochastic execution uncertainties.}
All experiments use MAPF benchmark maps~\cite{stern2019mapf}: \href{https://movingai.com/benchmarks/mapf/empty-8-8.png}{\texttt{Small Empty Map}}, \href{https://movingai.com/benchmarks/mapf/random-32-32-10.png}{\texttt{Small Random Map}}, \href{https://movingai.com/benchmarks/mapf/empty-48-48.png}{\texttt{Large Empty Map}}, \href{https://movingai.com/benchmarks/mapf/random-64-64-10.png}{\texttt{Large Random Map}}, 
{\href{https://movingai.com/benchmarks/mapf/index.html}{\texttt{Warehouse Map}} and \href{https://movingai.com/benchmarks/mapf/room-32-32-4.png}{\texttt{Room Map}}}\footnote{All experiments run on a 2023 MacBook Pro (12-core CPU, 36 GB RAM)}.
{
\begin{remark}[Metrics, success rate, and runtime]
    \label{rem:metrics-selection}
    We report \gls{acr:soc} in one-shot \gls{acr:mapf}, where makespan is known to be easier than \gls{acr:soc}~\cite{surynek2016empirical}, and throughput in lifelong and uncertain-execution settings. 
    Since \gls{acr:accbs} starts from $h_r=1$, it returns a valid first-step movement in all evaluated regimes, yielding a $100\%$ success rate.
    As an \emph{anytime} algorithm, \gls{acr:accbs}' per-step planning time is capped at the assigned budget~$t_{\max}$.
\end{remark}
}

\begin{figure}[tb]
    \centering

    \subfloat[\href{https://movingai.com/benchmarks/mapf/empty-8-8.png}{\texttt{Small Empty Map}}]{\includegraphics[width=0.5\linewidth, trim=0cm 0.2cm 0cm 0.2cm, clip]{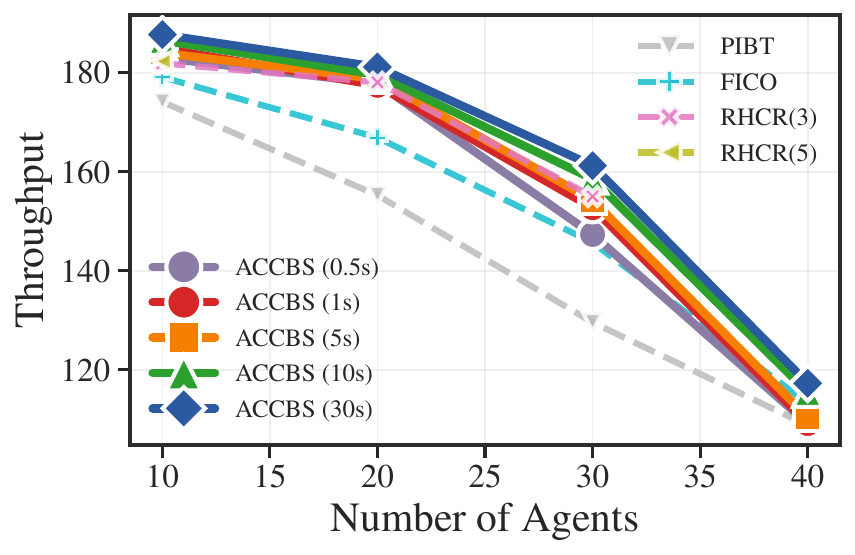}} \hfill
    \subfloat[\href{https://movingai.com/benchmarks/mapf/random-32-32-10.png}{\texttt{Small Random Map}}]{\includegraphics[width=0.5\linewidth, trim=0cm 0.2cm 0cm 0.2cm, clip]{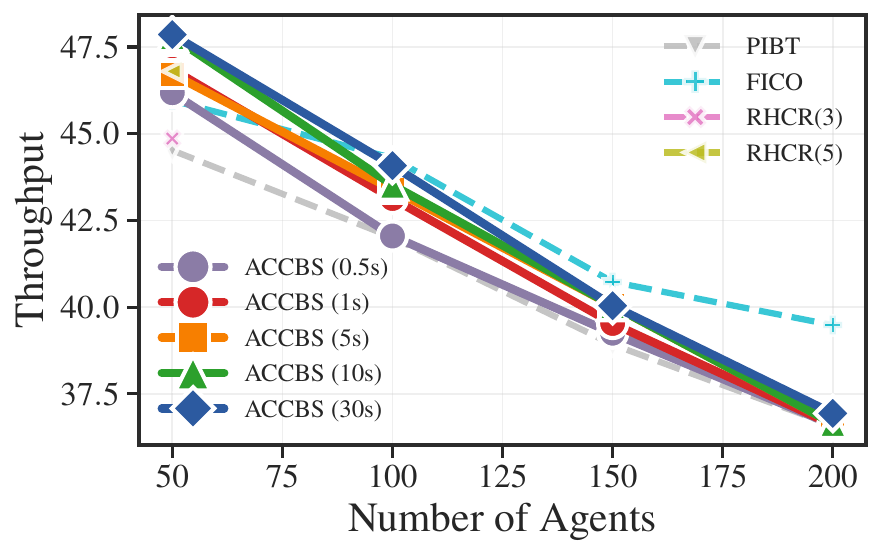}} 
    \\
    \subfloat[\href{https://movingai.com/benchmarks/mapf/empty-48-48.png}{\texttt{Large Empty Map}}]{\includegraphics[width=0.5\linewidth, trim=0cm 0.2cm 0cm 0.2cm, clip]{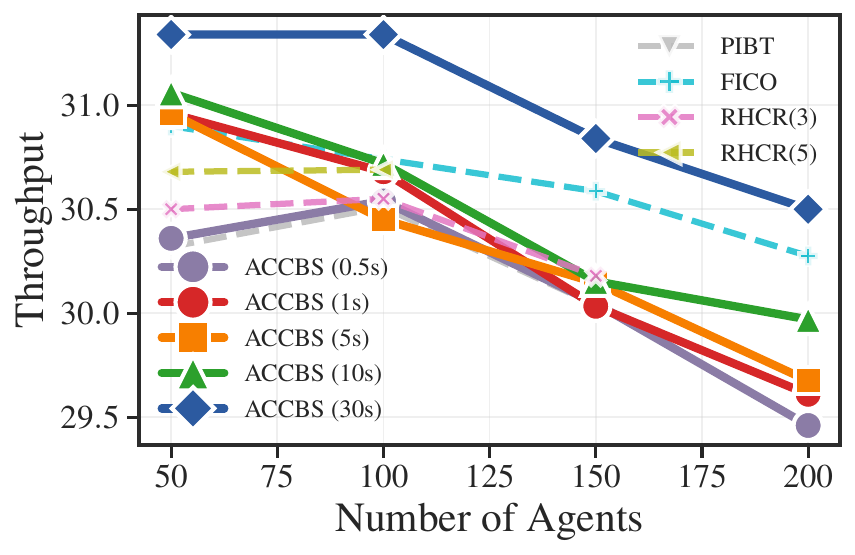}} \hfill
    \subfloat[\href{https://movingai.com/benchmarks/mapf/random-64-64-10.png}{\texttt{Large Random Map}}]{\includegraphics[width=0.5\linewidth, trim=0cm 0.2cm 0cm 0.2cm, clip]{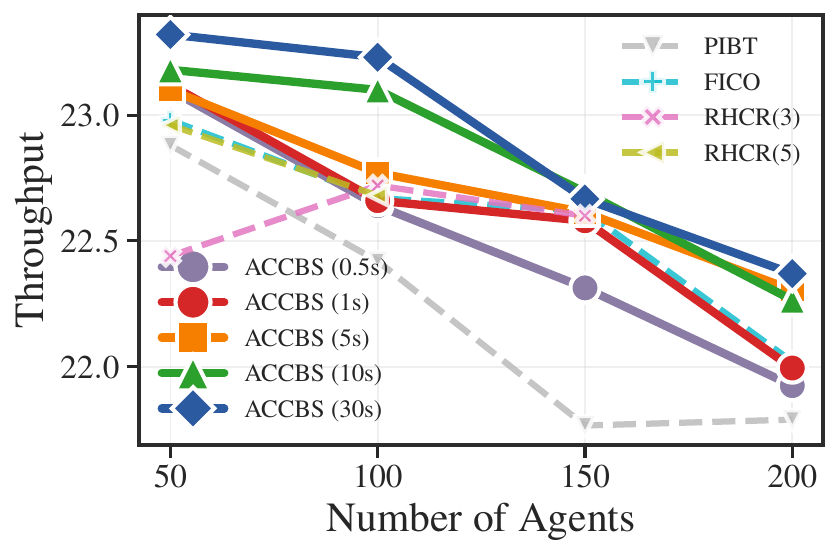}} 
    \\
    \subfloat[\href{https://movingai.com/benchmarks/mapf/index.html}{\texttt{Warehouse Map}}]{\includegraphics[width=0.5\linewidth, trim=0cm 0.2cm 0cm 0.2cm, clip]{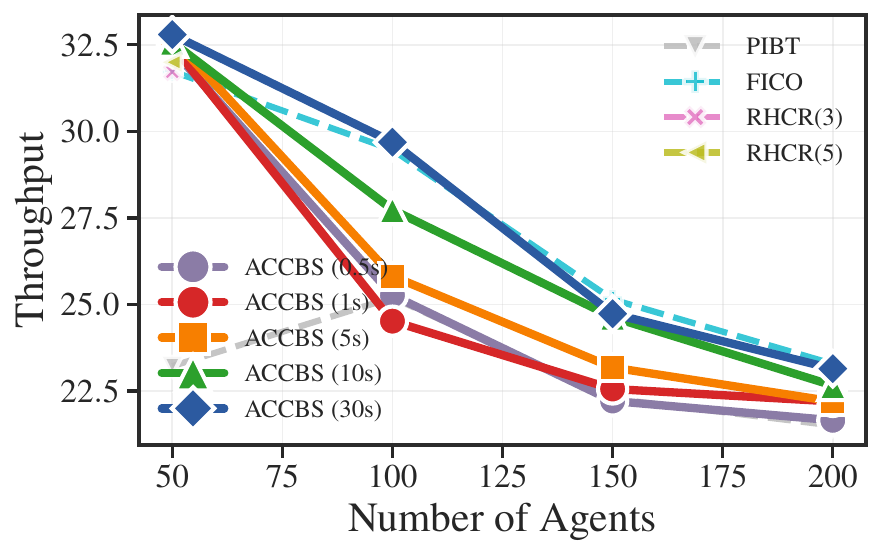}} \hfill
    \subfloat[\href{https://movingai.com/benchmarks/mapf/room-32-32-4.png}{\texttt{Room Map}}]{\includegraphics[width=0.5\linewidth, trim=0cm 0.2cm 0cm 0.2cm, clip]{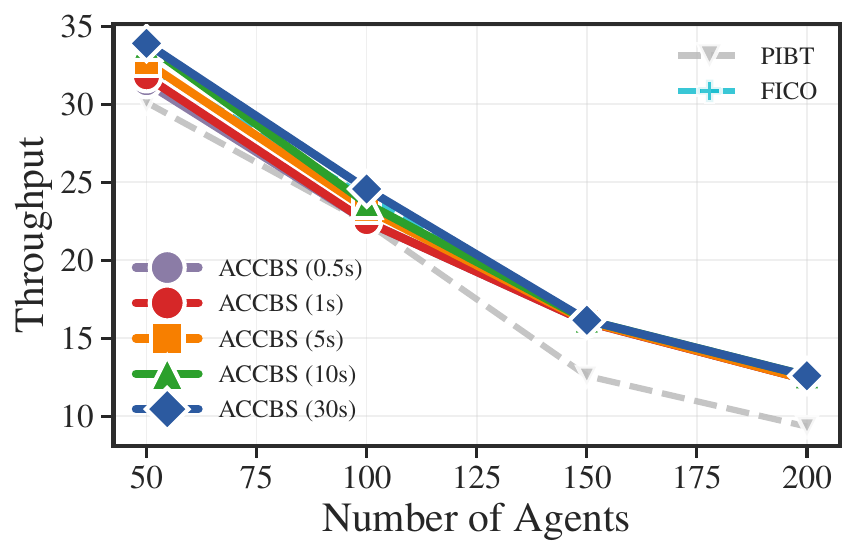}} 
    
    \caption{\gls{acr:accbs} can naturally deal with lifelong \gls{acr:mapf} with changing goals. Similar to one-shot case, more time budget can generally lead to better throughput performance. }
    \label{fig:lifelong-exp}
    \vspace*{-5mm}
\end{figure}

{\myparagraph{One-shot \gls{acr:mapf}} --
The classical one-shot setting uses a static instance, ideal actuator, and terminates when all agents reach their goals. 
\cref{fig:one-shot-exp} compares \gls{acr:accbs} with PIBT~\cite{okumura2022priority}, EECBS~\cite{li2021eecbs}, MAPF-LNS2~\cite{li2022mapf}, LaCAM2~\cite{okumura2023improving}, and LaCAM3~\cite{okumura2023engineering}.
As the per-step budget increases, \gls{acr:accbs} generally lowers \gls{acr:soc} and approaches optimality on easier instances. 
It improves over fast baselines such as PIBT and LaCAM2, remains competitive with stronger full-trajectory solvers, and avoids their failure mode of timing out before returning an executable plan.\footnote{As in the original experiments, the timeout is 1 minute. 
EECBS times out on the \texttt{Room Map} and larger \texttt{Warehouse Map}, \texttt{Small Random Map}, and \texttt{Small Empty Map}; MAPF-LNS2 on larger \texttt{Room Map}.}
Thus, \gls{acr:accbs} occupies the intended middle ground between expensive high-quality planning and fast lower-quality planning.
}

{
\myparagraph{Lifelong \gls{acr:mapf}} --
The closed-loop design makes \gls{acr:accbs} directly applicable to lifelong \gls{acr:mapf}, where agents receive new goals upon completing current ones. 
We run fixed-length episodes with an ideal actuator and report throughput, i.e., completed goals per agent over 1000 steps. 
\cref{fig:lifelong-exp} shows that larger budgets generally improve throughput, confirming that \gls{acr:accbs} turns additional computation into better online performance while remaining immediately executable.
Baselines include RHCR~\cite{li2021lifelong} with two window sizes, FICO~\cite{li2025fico}, and PIBT~\cite{okumura2022priority}.
\gls{acr:accbs} achieves competitive quality, and its adaptive horizon remains applicable when fixed-window RHCR times out.\footnote{As in the original paper, we use a 1-minute timeout per round, and RHCR uses \gls{acr:cbs} as its Windowed MAPF solver, matching the CBS-style search of \gls{acr:accbs}. RHCR consistently times out for larger fleets, including all \texttt{Room Map} experiments.
}
 {%
Each \gls{acr:accbs} curve in \cref{fig:lifelong-exp} corresponds to one budget ($\le 30$\,s per call), so comparing curves shows throughput as a function of per-step computation time. 
When a round times out, RHCR commits no steps, but the anytime design can always return an executable movement (\cref{rem:comparison_windowed}).}
}

\begin{figure}[t]
    \centering
    \subfloat[\href{https://movingai.com/benchmarks/mapf/empty-8-8.png}{\texttt{Small Empty Map}}, 20 agents, $p_{\mathrm{delay}} = p_{\mathrm{add}} = 0.1$]{\includegraphics[width=0.9\linewidth]{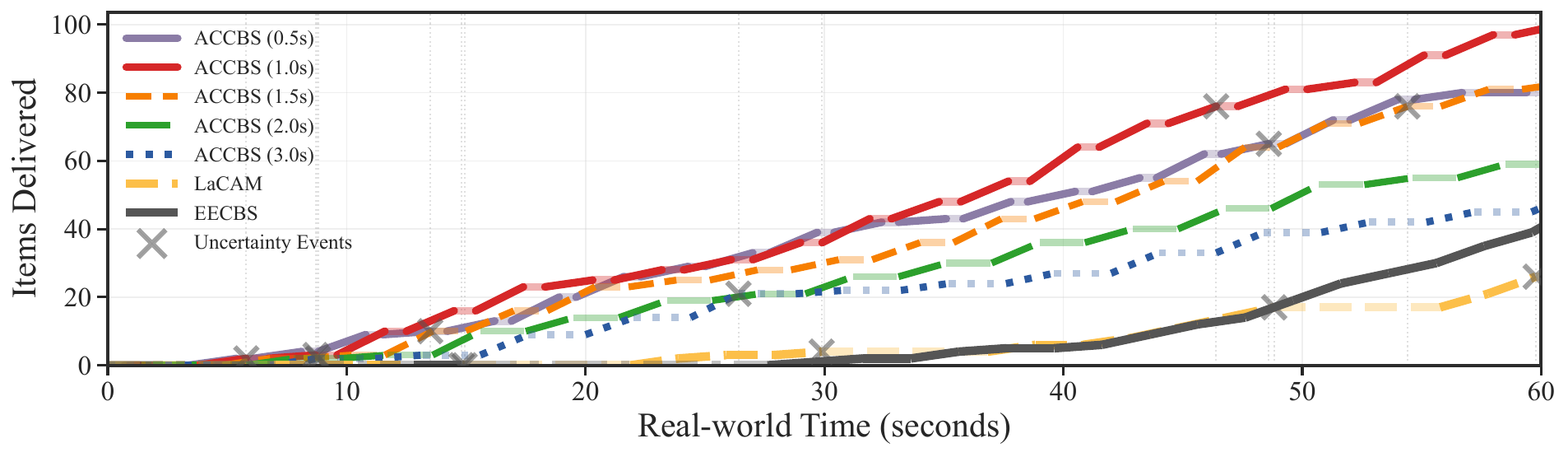}}
    \\
    \subfloat[\href{https://movingai.com/benchmarks/mapf/empty-8-8.png}{\texttt{Small Empty Map}}, 20 agents, $p_{\mathrm{delay}} = p_{\mathrm{add}} = 0.5$]{\includegraphics[width=0.9\linewidth]{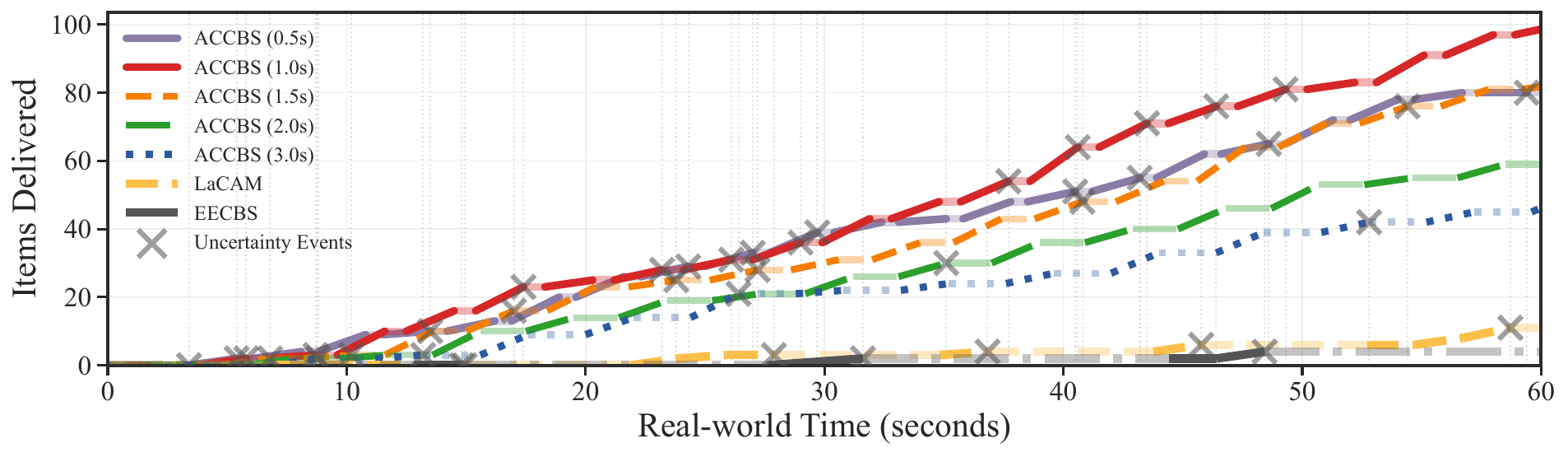}} 
    \caption{{
    This case study considers stochastic uncertainties, where open-loop algorithms are bottlenecked by frequent replanning 
    while closed-loop methods such as \gls{acr:accbs} excel. 
    Performance gains from longer ``thinking'' time may not justify the cost, making budget selection non-trivial. 
    }
    }
    \label{fig:uncertain-case-study}
    \vspace*{-5mm}
\end{figure}

{
\myparagraph{Realistic case study with uncertainties} --
We simulate mixed online events, with stochastic execution delays and agent arrivals/departures at each timestep. 
\gls{acr:accbs} absorbs these events at the next control update by replanning from the latest state and instance, whereas open-loop baselines must replan full trajectories whenever an event invalidates the current plan. 
\cref{fig:uncertain-case-study} shows that this replanning bottlenecks open-loop methods, while \gls{acr:accbs} maintains higher delivery throughput.
It also highlights the budget-selection tradeoff: longer ``thinking'' can improve decisions, but may reduce throughput when computation outweighs the quality gain.
}

{
\myparagraph{Prioritized-conflict ablation} --
Prioritized-conflict (PC) selection~\cite{boyarski2015icbs} can reduce tree expansions, but classifying conflicts also consumes the per-step budget. 
\cref{fig:remove-pc-ablation} shows that, in \gls{acr:accbs}, this overhead can outweigh the search-ordering benefit and reduce solution quality under fixed budgets.

}

\begin{figure}[tb]
    \centering
    \subfloat[\href{https://movingai.com/benchmarks/mapf/empty-48-48.png}{\texttt{Large Empty Map}}]{\includegraphics[width=0.5\linewidth, trim=0cm 0.2cm 0cm 0.2cm, clip]{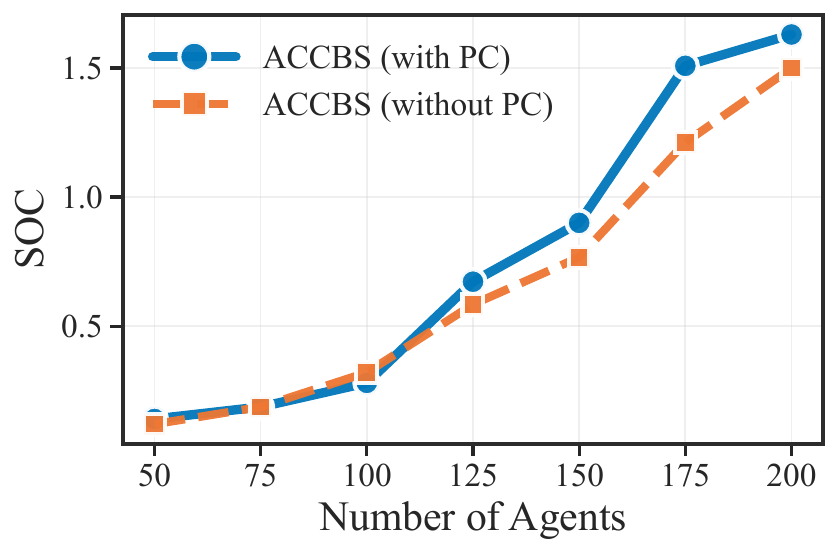}} \hfill
    \subfloat[\href{https://movingai.com/benchmarks/mapf/random-32-32-10.png}{\texttt{Small Random Map}}]{\includegraphics[width=0.5\linewidth, trim=0cm 0.2cm 0cm 0.2cm, clip]{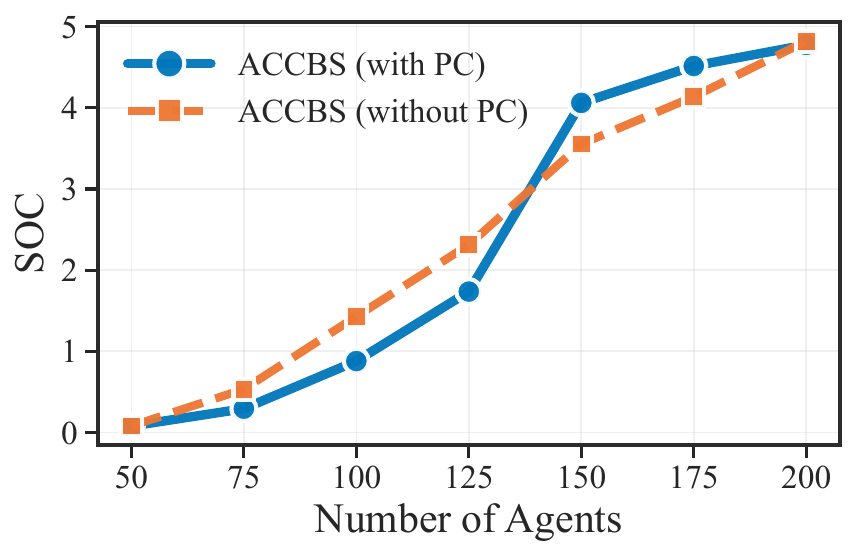}} 
    \caption{
    The conflict classification overhead may outweigh gains in tree exploration efficiency, lowering the solution quality. 
    }
    \label{fig:remove-pc-ablation}
    \vspace*{-5mm}
\end{figure}

\section{Conclusion and Future Work}

This paper introduced \gls{acr:accbs}, a closed-loop \gls{acr:mapf} algorithm built on a finite-horizon variant of \gls{acr:cbs} with a dynamically adjustable planning horizon driven by the available computational budget. 
By introducing the notion of an \emph{active prefix} and exploiting a cost invariance property, \gls{acr:accbs} reuses a single constraint tree across different horizons, enabling seamless horizon transitions and anytime behavior while retaining completeness and asymptotic optimality.

The flexibility of \gls{acr:accbs} and its connection to \gls{acr:mpc} open several avenues for future work. 
An immediate question is how to automatically tune the computational budget and horizon schedule for different applications. 
Other \gls{acr:cbs} variants, such as ECBS~\cite{barer2014suboptimal} and EECBS~\cite{li2021eecbs}, could also be embedded in the \gls{acr:accbs} framework to accelerate horizon expansion. 
 {Further directions include finer robustness metrics, comparisons with repair-based or partial-replanning execution strategies, and ablations isolating tree reuse from horizon adaptation.}
The anytime property naturally integrates with higher-level task scheduling, enabling holistic co-design of assignment and planning in large-scale systems.

{
  \bibliographystyle{IEEEtran}
  \bibliography{references}

@string{aaai = {Nat. Conf. on Artificial Intelligence (AAAI)}}

@string{ai = {Artificial Intelligence}}

@string{ijcai = {Intl. Joint Conf. on AI (IJCAI)}}

@string{springer = {Springer Verlag}}

@string{mpc = {Math. Program. Comput.}}

@article{okumura2022priority,
  title={Priority inheritance with backtracking for iterative multi-agent path finding},
  author={Okumura, Keisuke and Machida, Manao and D{\'e}fago, Xavier and Tamura, Yasumasa},
  journal=ai,
  volume={310},
  pages={103752},
  year={2022},
  publisher={Elsevier}
}

@inproceedings{okumura2023lacam,
  title={Lacam: Search-based algorithm for quick multi-agent pathfinding},
  author={Okumura, Keisuke},
  booktitle=aaai,
  volume={37},
  number={10},
  pages={11655--11662},
  year={2023}
}

@article{okumura2023improving,
  title={Improving lacam for scalable eventually optimal multi-agent pathfinding},
  author={Okumura, Keisuke},
  journal={arXiv preprint arXiv:2305.03632},
  year={2023}
}

@article{okumura2023engineering,
  title={Engineering LaCAM*: Towards Real-Time, Large-Scale, and Near-Optimal Multi-Agent Pathfinding},
  author={Okumura, Keisuke},
  journal={arXiv preprint arXiv:2308.04292},
  year={2023}
}

@article{wurman2008coordinating,
  title={Coordinating hundreds of cooperative, autonomous vehicles in warehouses},
  author={Wurman, Peter R and D'Andrea, Raffaello and Mountz, Mick},
  journal={AI magazine},
  volume={29},
  number={1},
  pages={9--9},
  year={2008}
}

@article{sharon2015conflict,
  title={Conflict-based search for optimal multi-agent pathfinding},
  author={Sharon, Guni and Stern, Roni and Felner, Ariel and Sturtevant, Nathan R},
  journal=ai,
  volume={219},
  pages={40--66},
  year={2015},
  publisher={Elsevier}
}

@inproceedings{li2021eecbs,
  title={Eecbs: A bounded-suboptimal search for multi-agent path finding},
  author={Li, Jiaoyang and Ruml, Wheeler and Koenig, Sven},
  booktitle=aaai,
  volume={35},
  number={14},
  pages={12353--12362},
  year={2021}
}

@article{wagner2015subdimensional,
  title={Subdimensional expansion for multirobot path planning},
  author={Wagner, Glenn and Choset, Howie},
  journal=ai,
  volume={219},
  pages={1--24},
  year={2015},
  publisher={Elsevier}
}

@inproceedings{li2022mapf,
  title={MAPF-LNS2: Fast repairing for multi-agent path finding via large neighborhood search},
  author={Li, Jiaoyang and Chen, Zhe and Harabor, Daniel and Stuckey, Peter J and Koenig, Sven},
  booktitle=aaai,
  volume={36},
  number={9},
  pages={10256--10265},
  year={2022}
}

@inproceedings{li2021anytime,
  title={Anytime multi-agent path finding via large neighborhood search},
  author={Li, Jiaoyang and Chen, Zhe and Harabor, Daniel and Stuckey, Peter J and Koenig, Sven},
  booktitle=ijcai,
  pages={4127--4135},
  year={2021}
}

@article{shen2023tracking,
  title={Tracking progress in multi-agent path finding},
  author={Shen, Bojie and Chen, Zhe and Cheema, Muhammad Aamir and Harabor, Daniel D and Stuckey, Peter J},
  journal={arXiv preprint arXiv:2305.08446},
  year={2023}
}

@inproceedings{standley2010finding,
  title={Finding optimal solutions to cooperative pathfinding problems},
  author={Standley, Trevor},
  booktitle=aaai,
  volume={24},
  number={1},
  pages={173--178},
  year={2010}
}

@article{stern2019mapf,
  title={Multi-Agent Pathfinding: Definitions, Variants, and Benchmarks},
  author={Roni Stern and Nathan R. Sturtevant and Ariel Felner and Sven Koenig and Hang Ma and Thayne T. Walker and Jiaoyang Li and Dor Atzmon and Liron Cohen and T. K. Satish Kumar and Eli Boyarski and Roman Bartak},
  journal={Symposium on Combinatorial Search (SoCS)},
  year={2019},
  pages={151--158}
}

@article{silver2005cooperative,
  title={Cooperative pathfinding},
  author={Silver, David},
  journal={Proceedings of the AAAI Conference on Artificial Intelligence and Interactive Digital Entertainment},
  volume={1},
  number={1},
  pages={117--122},
  year={2005}
}

@inproceedings{yu2013planning,
  title={Planning optimal paths for multiple robots on graphs},
  author={Yu, Jingjin and LaValle, Steven M},
  booktitle={2013 IEEE International Conference on Robotics and Automation},
  pages={3612--3617},
  year={2013},
  organization={IEEE}
}

@article{ma2017lifelong,
  title={Lifelong multi-agent path finding for online pickup and delivery tasks},
  author={Ma, Hang and Li, Jiaoyang and Kumar, TK and Koenig, Sven},
  journal={arXiv preprint arXiv:1705.10868},
  year={2017}
}

@inproceedings{ma2017multi,
  title={Multi-agent path finding with delay probabilities},
  author={Ma, Hang and Kumar, TK Satish and Koenig, Sven},
  booktitle=aaai,
  volume={31},
  number={1},
  year={2017}
}

@inproceedings{li2021lifelong,
  title={Lifelong multi-agent path finding in large-scale warehouses},
  author={Li, Jiaoyang and Tinka, Andrew and Kiesel, Scott and Durham, Joseph W and Kumar, TK Satish and Koenig, Sven},
  booktitle=aaai,
  volume={35},
  number={13},
  pages={11272--11281},
  year={2021}
}

@article{gandotra2025anytime,
  title={Anytime Single-Step MAPF Planning with Anytime PIBT},
  author={Gandotra, Nayesha and Veerapaneni, Rishi and Saleem, Muhammad Suhail and Harabor, Daniel and Li, Jiaoyang and Likhachev, Maxim},
  journal={arXiv preprint arXiv:2504.07841},
  year={2025}
}

@inproceedings{atzmon2018robust,
  title={Robust multi-agent path finding},
  author={Atzmon, Dor and Stern, Roni and Felner, Ariel and Wagner, Glenn and Bart{\'a}k, Roman and Zhou, Neng-Fa},
  booktitle={Proceedings of the International Symposium on Combinatorial Search},
  volume={9},
  number={1},
  pages={2--9},
  year={2018}
}

@book{lavalle2006planning,
  title={Planning algorithms},
  author={LaValle, Steven M},
  year={2006},
  publisher={Cambridge university press}
}

@inproceedings{felner2017search,
  title={Search-based optimal solvers for the multi-agent pathfinding problem: Summary and challenges},
  author={Felner, Ariel and Stern, Roni and Shimony, Solomon and Boyarski, Eli and Goldenberg, Meir and Sharon, Guni and Sturtevant, Nathan and Wagner, Glenn and Surynek, Pavel},
  booktitle={Proceedings of the International Symposium on Combinatorial Search},
  volume={8},
  number={1},
  pages={29--37},
  year={2017}
}

@inproceedings{shaoul2024accelerating,
  title={Accelerating search-based planning for multi-robot manipulation by leveraging online-generated experiences},
  author={Shaoul, Yorai and Mishani, Itamar and Likhachev, Maxim and Li, Jiaoyang},
  booktitle={Proceedings of the International Conference on Automated Planning and Scheduling},
  volume={34},
  pages={523--531},
  year={2024}
}

@article{d2012guest,
  title={Guest editorial: A revolution in the warehouse: A retrospective on kiva systems and the grand challenges ahead},
  author={D'Andrea, Raffaello},
  journal={IEEE Transactions on Automation Science and Engineering},
  volume={9},
  number={4},
  pages={638--639},
  year={2012},
  publisher={IEEE}
}

@inproceedings{wang2025lns2+,
  title={LNS2+ RL: Combining multi-agent reinforcement learning with large neighborhood search in multi-agent path finding},
  author={Wang, Yutong and Duhan, Tanishq and Li, Jiaoyang and Sartoretti, Guillaume},
  booktitle=aaai,
  volume={39},
  number={22},
  pages={23343--23350},
  year={2025}
}

@article{paul2022multi,
  title={Multi agent path finding using evolutionary game theory},
  author={Paul, Sheryl and Deshmukh, Jyotirmoy V},
  journal={arXiv preprint arXiv:2212.02010},
  year={2022}
}

@inproceedings{boyarski2015icbs,
  title={Icbs: The improved conflict-based search algorithm for multi-agent pathfinding},
  author={Boyarski, Eli and Felner, Ariel and Stern, Roni and Sharon, Guni and Betzalel, Oded and Tolpin, David and Shimony, Eyal},
  booktitle={Proceedings of the International Symposium on Combinatorial Search},
  volume={6},
  number={1},
  pages={223--225},
  year={2015}
}

@article{lam2022branch,
  title={Branch-and-cut-and-price for multi-agent path finding},
  author={Lam, Edward and Le Bodic, Pierre and Harabor, Daniel and Stuckey, Peter J},
  journal={Computers \& Operations Research},
  volume={144},
  pages={105809},
  year={2022},
  publisher={Elsevier}
}

@article{li2025fico,
  title={FICO: Finite-Horizon Closed-Loop Factorization for Unified Multi-Agent Path Finding},
  author={Li, Jiarui and Zanardi, Alessandro and Pecora, Federico and Zhang, Runyu and Zardini, Gioele},
  journal={arXiv preprint arXiv:2511.13961},
  year={2025}
}

@article{sharon2013increasing,
  title={The increasing cost tree search for optimal multi-agent pathfinding},
  author={Sharon, Guni and Stern, Roni and Goldenberg, Meir and Felner, Ariel},
  journal=ai,
  volume={195},
  pages={470--495},
  year={2013},
  publisher={Elsevier}
}

@article{mayne2000constrained,
  title={Constrained model predictive control: Stability and optimality},
  author={Mayne, David Q and Rawlings, James B and Rao, Christopher V and Scokaert, Pierre OM},
  journal={Automatica},
  volume={36},
  number={6},
  pages={789--814},
  year={2000},
  publisher={Elsevier}
}

@inproceedings{barer2014suboptimal,
  title={Suboptimal variants of the conflict-based search algorithm for the multi-agent pathfinding problem},
  author={Barer, Max and Sharon, Guni and Stern, Roni and Felner, Ariel},
  booktitle={Proceedings of the International Symposium on Combinatorial Search},
  volume={5},
  number={1},
  pages={19--27},
  year={2014}
}

@inproceedings{surynek2016empirical,
  title={An empirical comparison of the hardness of multi-agent path finding under the makespan and the sum of costs objectives},
  author={Surynek, Pavel and Felner, Ariel and Stern, Roni and Boyarski, Eli},
  booktitle={Proceedings of the International Symposium on Combinatorial Search},
  volume={7},
  number={1},
  pages={145--146},
  year={2016}
}

@article{bachtiar2016continuity,
  title={Continuity and monotonicity of the MPC value function with respect to sampling time and prediction horizon},
  author={Bachtiar, Vincent and Kerrigan, Eric C and Moase, William H and Manzie, Chris},
  journal={Automatica},
  volume={63},
  pages={330--337},
  year={2016},
  publisher={Elsevier}
}

@book{grne2013nonlinear,
  title={Nonlinear model predictive control: theory and algorithms},
  author={Grne, Lars and Pannek, Jrgen},
  year={2013},
  publisher={Springer Publishing Company, Incorporated}
}

@inproceedings{veerapaneni2025windowed,
  title={Windowed MAPF with completeness guarantees},
  author={Veerapaneni, Rishi and Saleem, Muhammad Suhail and Li, Jiaoyang and Likhachev, Maxim},
  booktitle={Proceedings of the AAAI Conference on Artificial Intelligence},
  volume={39},
  number={22},
  pages={23323--23332},
  year={2025}
}
}

\end{document}